\newtheorem{prop}{Proposition}
\newtheorem{lemma}{Lemma}
\newtheorem{thm}{Theorem}
\newtheorem{rema}[thm]{Remark}
\def \ones {\mathbf{1}}
\def \b {\mathbf{b}}
\def \g {\mathbf{g}}
\def \s {\mathbf{s}}
\def \x {\mathbf{x}}
\def \z {\mathbf{z}}
\def \zt {\mathbf{z}_t}
\def \tht {\mathbf{\theta}_t}
\def \wt {\mathbf{w}_t}
\def \u {\mathbf{u}}
\def \v {\mathbf{v}}
\def \vt {\mathbf{v}_t}
\def \w {\mathbf{w}}
\def \B {\mathcal{B}}
\def \E {\mathbb{E}}
\def \L {\mathcal{L}}
\def \R {\mathbb{R}}
\def \W {\mathcal{W}}
\def \V {\mathcal{V}}
\def \bq {\begin{eqnarray}}
\def \eq {\end{eqnarray}}
\def \bqs {\begin{eqnarray*}}
\def \eqs {\end{eqnarray*}}
\author{Peilin Zhao\footnote{Department of Statistics and Biostatistics, Rutgers University, New Jersey 08854, USA. \emph{peilin.zhao@rutgers.edu}}
        \and Jinwei Yang\footnote{Department of Mathematics, Rutgers University, New Jersey 08854, USA. \emph{yookinwi@math.rutgers.edu}}
        \and Tong Zhang\footnote{Department of Statistics and Biostatistics, Rutgers University, New Jersey 08854, USA. \emph{tzhang@stat.rutgers.edu} }
        \and Ping Li\footnote{Department of Statistics and Biostatistics, Department of Computer Science, Rutgers University, New Jersey 08854, USA. \emph{pingli@stat.rutgers.edu}}
          }
\begin{document}
\date{}
\title{Adaptive Stochastic Alternating Direction Method of Multipliers}
\maketitle

\begin{abstract}
\noindent The  {\em Alternating Direction Method of Multipliers (ADMM)} has been studied for years. The traditional ADMM algorithm needs to compute, at each iteration, an (empirical) expected loss function on all  training examples, resulting in a computational complexity proportional to the number of training examples. To reduce the time complexity, stochastic ADMM algorithms were proposed to replace the expected function with a random loss function associated with one uniformly drawn example plus a Bregman divergence. The Bregman divergence, however, is derived from a simple second order proximal function, the half squared norm, which could be a suboptimal choice.\\

\noindent In this paper, we present a new family of stochastic ADMM algorithms with optimal second order proximal functions, which produce a new family of adaptive subgradient methods. We theoretically prove that their regret bounds are as good as the bounds which could be achieved by the best proximal function that can be chosen in hindsight. Encouraging empirical results on a variety of real-world datasets confirm the effectiveness and efficiency of the proposed algorithms.

\end{abstract}

\section{Introduction}
Originally introduced in~\cite{Glowinski1975,gabay1976dual}, the offline/batch Alternating Direction Method of Multipliers (ADMM)  stemmed from the augmented Lagrangian method, with  its global convergence property  established in~\cite{gabay1983chapter,glowinski1989augmented,eckstein1992douglas}. Recent studies have shown that ADMM  achieves a convergence rate of $O(1/T)$~\cite{DBLP:journals/siamjo/MonteiroS13,he20121} (where $T$ is number of iterations of ADMM), when the objective function is generally convex. Furthermore,  ADMM  enjoys a convergence rate of $O(\alpha^{T})$, for some $\alpha\in(0,1)$, when the objective function is strongly convex and smooth~\cite{luo2012linear,deng2012global}. ADMM has shown attractive performance in a wide range of  real-world problems such as compressed sensing~\cite{yang2011alternating}, image restoration~\cite{goldstein2009split}, video processing, and matrix completion~\cite{DBLP:journals/mp/GoldfarbMS13}, etc.

From the computational perspective, one drawback of ADMM is that, at every iteration, the method needs to compute an (empirical) expected loss function on all the training examples. The computational complexity is propositional to the number of training examples, which makes the original ADMM unsuitable for solving large-scale learning and big data mining problems. The online ADMM (OADMM) algorithm~\cite{wang2012online} was proposed to tackle the computational challenge. For OADMM, the objective function is replaced with an online function  at every step, which only depends on a single training example. OADMM can achieve an average regret bound of $O(1/\sqrt{T})$ for convex objective functions and $O(\log(T)/T)$ for strongly convex objective functions.  Interestingly, although the optimization of the loss function is assumed to be easy in the analysis of~\cite{wang2012online}, it is actually not necessarily easy in practice. To address this issue, the stochastic ADMM algorithm was  proposed, by  linearizing the the online loss function~\cite{ouyang2013stochastic,suzuki2013dual}. In stochastic ADMM algorithms, the online loss function is firstly  uniformly drawn from all the loss functions associated with all the training examples. Then the loss function is replaced with its first order expansion at the current solution plus Bregman divergence from the current solution. The Bregman divergence is based on a simple proximal function, the half squared norm, so that the Bregman divergence is the half squared distance. In this way, the optimization of the loss function enjoys a closed-form solution.  The stochastic ADMM achieves similar convergence rates as OADMM. Using half square norm as proximal function, however, may be a suboptimal choice. Our paper will address this issue.\\

\noindent\textbf{Our contribution}. \  In the previous work~\cite{ouyang2013stochastic,suzuki2013dual} the Bregman divergence is derived from a simple second order function, i.e., the half squared norm, which could be a suboptimal choice~\cite{duchi2011adaptive}. In this paper, we present a new family of stochastic ADMM algorithms with adaptive proximal functions, which can accelerate  stochastic ADMM by using adaptive subgradient. We theoretically prove that the regret bounds  of our methods are as good as those achieved by stochastic ADMM with the best proximal function that can be chosen in hindsight. The effectiveness and efficiency of the proposed algorithms are  confirmed by encouraging empirical evaluations on several real-world datasets.\\

\noindent\textbf{Organization}. Section 2 presents the proposed algorithms.  Section 3 gives our experimental results. Section 4 concludes our paper. Additional proofs can be found in the supplementary material.

\section{Adaptive Stochastic Alternating Direction Method of Multipliers }

\subsection{Problem Formulation}
In this paper, we will study a family of convex optimization problems, where our objective functions are composite. Specially, we are interested in the following equality-constrained optimization task:
\bq\label{eq:origninal-problem}
&&\hspace{-0.4in} \min_{\w\in\W,\v\in\V} f((\w^\top, \v^\top)^\top):= \E_\xi\ell(\w,  \xi) + \varphi(\v),\quad {\rm s.t.}\; A\w+B\v = \b,
\eq
where $\w \in \mathbb{R}^{d_1}$, $\v \in \mathbb{R}^{d_2}$, $A \in \mathbb{R}^{m \times d_1}$, $B \in \mathbb{R}^{m \times d_2}$, $\b \in \mathbb{R}^m$, $\W$ and $\V$ are convex sets. For simplicity, the notation $\ell$ is used for both the instance function value $\ell(\w, \xi)$ and its expectation $\ell(\w) = \E_\xi\ell(\w, \xi)$. It is assumed that  a sequence of identical and independent (i.i.d.) observations  can be drawn from the random vector $\xi$, which satisfies a fixed but unknown distribution. When $\xi$ is deterministic, the above optimization becomes the traditional problem formulation of ADMM~\cite{boyd2011distributed}. In this paper, we will assume the functions $\ell$ and $\varphi$ are convex but not necessarily continuously differentiable. In addition, we denote the optimal solution of \eqref{eq:origninal-problem} as $(\w_{*}^{\top}, \v_{*}^{\top})^{\top}$.

Before presenting the proposed algorithm, we first introduce some notations. For a positive definite matrix $G \in \mathbb{R}^{d_1 \times d_1}$, we define the $G$-norm of a vector $\w$ as $\| \w \|_G: = \sqrt{\w^{\top}G\w}$. When there is no ambiguity, we often use $\| \cdot \|$ to denote the Euclidean norm $\| \cdot\|_2$. We use $\langle \cdot, \cdot\rangle$ to denote the inner product in a finite dimensional Euclidean space. Let $H_t$ be a positive definite matrix for $t \in \mathbb{N}$. Set the proximal function $\phi_t(\cdot)$, as $\phi_t(\w) = \frac{1}{2}\|\w\|^2_{H_t}=\frac{1}{2}\langle \w, H_t\w\rangle$. Then the corresponding {\it Bregman divergence} for $\phi_t(\w)$ is defined as
\bqs
\B_{\phi_t}(\w, \u) = \phi_t(\w) - \phi_t(\u) - \langle \nabla \phi_t(\u), \w-\u \rangle =  \frac{1}{2}\| \w-\u \|^2_{H_t}.
\eqs

\subsection{Algorithm}
To solve the problem~\eqref{eq:origninal-problem}, a popular method is Alternating Direction Multipliers Method (ADMM). ADMM splits the optimizations with respect to $\w$ and $\v$ by minimizing  the augmented Lagrangian:
\bqs
&&\hspace{-0.35in}\min_{\w,\v}\L_\beta(\w,\v,\theta):=\ell(\w)+\varphi(\v) - \langle\theta, A\w+B\v-\b\rangle+ \frac{\beta}{2}\|A\w+B\v-\b\|^2,
\eqs
where $\beta>0$ is a pre-defined penalty. Specifically, the ADMM algorithm  minimizes $\L_{\beta}$ as follows
\bqs
\w_{t+1}=\arg\min_{\w}\L_\beta(\w,\vt,\tht),\ \ \ \v_{t+1}=\arg\min_{\v}\L_\beta(\w_{t+1},\v,\tht),\ \ \ \theta_{t+1}=\tht-\beta(A\w_{t+1}+B\v_{t+1}-\b).
\eqs

At each step, however, ADMM requires calculation of the expectation $\E_\xi\ell(\w, \xi)$, which may be unrealistic or computationally too expensive, since we may only have an unbiased estimate of $\ell(\w)$ or the  expectation $\E_\xi\ell(\w, \xi)$ is an empirical one for big data problem.  To solve this issue, we propose to minimize the its following stochastic approximation:
\bqs
&&\hspace{-0.25in}\L_{\beta, t}(\w, \v, \theta) = \langle\g_t , \w \rangle + \varphi(\v) - \langle \theta,A\w+B\v- \b \rangle + \frac{\beta}{2}\| A\w+B\v-\b\|^2 + \frac{1}{\eta}\B_{\phi_t}(\w,\w_t),
\eqs
where $\g_t=\ell'(\w_t, \xi_{t})$ and $H_t$ for $\phi_t =\frac{1}{2}\|\w\|^2_{H_t}$ will be specified later. This objective linearizes the $\ell(\w,\xi_t)$ and adopts a dynamic  Bregman divergence function to keep the new model near to the previous one. It is easy to see that this proposed approximation includes  the one proposed by~\cite{ouyang2013stochastic} as a special case when $H_t=I$. To minimize the above function, we followed the ADMM algorithm to  optimize over $\w$, $\v$, $\theta$ sequentially, by fixing the others. In addition, we also need to update the $H_t$ for $\B_{\phi_t}$ at every step, which will be specified later. Finally the proposed Adaptive Stochastic Alternating Direction Multipliers Method  ({\bf Ada-SADMM}) is summarized in Algorithm~\ref{alg:adaptive-stochastic-ADMM}.
\begin{algorithm}[htpb]
\caption{ Adaptive Stochastic Alternating Direction Method of Multipliers ({\bf Ada-SADMM}).}\label{alg:adaptive-stochastic-ADMM}
\begin{algorithmic}
\STATE  {\bf Initialize:} $\w_1=\mathbf{0}$, $\u_1=\mathbf{0}$, $\theta_1 = \mathbf{0}$, $H_1=aI$, and $a>0$.
\FOR{$t = 1, 2, \dots, T$}
\STATE Compute $\g_{t}=\ell'(\w_t,\xi_{t})$;
\STATE Update $H_t$ and compute $\B_{\phi_t}$;
\STATE $\w_{t+1} =\arg\min_{\w \in \W}$ $\L_{\beta, t}(\w, \vt, \tht)$;
\STATE $\v_{t+1} =\arg\min_{\v \in \V}$ $\L_{\beta, t}(\w_{t+1}, \v,    \tht)$;
\STATE $\theta_{t+1}=\tht - \beta(A\w_{t+1} + B\v_{t+1} - \b)$;
\ENDFOR
\end{algorithmic}
\end{algorithm}

\subsection{Analysis}
In this subsection we will analyze the performance of the proposed algorithm for general $H_t$, $t=1,\ldots,T$. Specifically, we will provide an expected convergence rate of the iterative solutions. To achieve this goal, we firstly begin with a technical lemma, which will facilitate the later analysis.
\begin{lemma}
Let $\ell(\w,\xi_t)$ and $\varphi(\w)$ be convex functions, and $H_t$ be positive definite, for $t \ge 1$. Then for Algorithm~\ref{alg:adaptive-stochastic-ADMM}, we have the following inequality
\bqs
&&\hspace{-0.3in} \ell(\wt) + \varphi(\v_{t+1}) - \ell(\w) - \varphi(\v) + (\z_{t+1} - \z)^{\top}F(\z_{t+1}) \\
&&\hspace{-0.3in}\leq \frac{\eta\| \g_{t}\|_{H_t^{\ast}}^2}{2} + \frac{1}{\eta}[\B_{\phi_t}(\w_t,\w)-\B_{\phi_t}(\w_{t+1},\w)] + \frac{\beta}{2}(\| A\w + B\vt - \b \|^2 - \| A\w + B \v_{t+1} - \b\|^2)+\langle \delta_{t}, \w - \wt \rangle  \\
&&+ \frac{1}{2\beta}(\| \theta - \tht \|^2 - \| \theta - \theta_{t+1}\|^2),
\eqs
where
$\zt=(\wt^\top, \vt^\top, \tht^\top)^\top$, $\z=(\w^\top, \v^\top, \theta^\top)^\top$, $\mathbf{\delta}_t = \g_t - \ell'(\w_{t})$, and $F(\z) =( (-A^{\top}\theta)^\top, (-B^{\top}\theta)^\top, (A\w+B\v-\b)^\top )^\top$.
\end{lemma}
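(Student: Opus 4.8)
\noindent The plan is to turn the three alternating updates of Algorithm~\ref{alg:adaptive-stochastic-ADMM} into variational inequalities, add them to the subgradient inequalities for $\ell$ and $\varphi$, and then repackage every surviving inner product as a difference of squared norms. To start, convexity of $\ell$ at $\w_t$ and of $\varphi$ at $\v_{t+1}$ gives
\[
\ell(\wt)-\ell(\w)\le\langle\ell'(\w_t),\w_t-\w\rangle=\langle\g_t,\w_t-\w\rangle+\langle\delta_t,\w-\wt\rangle,\qquad \varphi(\v_{t+1})-\varphi(\v)\le\langle\varphi'(\v_{t+1}),\v_{t+1}-\v\rangle,
\]
where $\varphi'(\v_{t+1})\in\partial\varphi(\v_{t+1})$ is the subgradient produced by the $\v$-subproblem. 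Next I would record the first-order optimality conditions of the $\w$- and $\v$-minimizations over $\W$ and $\V$; using $\nabla\phi_t(\w)=H_t\w$, the $\w$-condition reads
\[
\langle\,\g_t+\tfrac1\eta H_t(\w_{t+1}-\w_t)-A^\top\tht+\beta A^\top(A\w_{t+1}+B\vt-\b),\ \w-\w_{t+1}\,\rangle\ge0\qquad\forall\,\w\in\W,
\]
and the $\v$-condition is its analogue with $\varphi'(\v_{t+1})$, $-B^\top\tht$ and $\beta B^\top(A\w_{t+1}+B\v_{t+1}-\b)$.

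The key simplification is to eliminate $\tht$ using the dual update $A\w_{t+1}+B\v_{t+1}-\b=\tfrac1\beta(\tht-\theta_{t+1})$, which rewrites $\beta A^\top(A\w_{t+1}+B\vt-\b)$ as $A^\top(\tht-\theta_{t+1})+\beta A^\top B(\vt-\v_{t+1})$ and $\beta B^\top(A\w_{t+1}+B\v_{t+1}-\b)$ as $B^\top(\tht-\theta_{t+1})$; after substitution the two optimality conditions carry exactly the operators $-A^\top\theta_{t+1}$ and $-B^\top\theta_{t+1}$ that occur in $F(\z_{t+1})$. Expanding $(\z_{t+1}-\z)^\top F(\z_{t+1})$ coordinatewise and adding the two convexity inequalities, the $\v$-coordinate $-\langle B^\top\theta_{t+1},\v_{t+1}-\v\rangle$ combines with $\langle\varphi'(\v_{t+1}),\v_{t+1}-\v\rangle$ into a quantity that is $\le0$ by the $\v$-optimality condition, while bounding $\langle\g_t,\w_{t+1}-\w\rangle$ via the $\w$-optimality condition cancels the $\w$-coordinate $-\langle A^\top\theta_{t+1},\w_{t+1}-\w\rangle$; after these cancellations the left-hand side of the lemma is bounded by
\[
\langle\g_t,\w_t-\w_{t+1}\rangle-\tfrac1\eta\langle H_t(\w_{t+1}-\w_t),\w_{t+1}-\w\rangle-\beta\langle B\vt-B\v_{t+1},A\w_{t+1}-A\w\rangle+\langle A\w_{t+1}+B\v_{t+1}-\b,\theta_{t+1}-\theta\rangle+\langle\delta_t,\w-\wt\rangle,
\]
whose last summand already appears verbatim on the right-hand side.

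What remains is algebra with squared norms. For the first term I would use Fenchel--Young in the $H_t$-geometry, $\langle\g_t,\w_t-\w_{t+1}\rangle\le\tfrac\eta2\|\g_t\|_{H_t^{\ast}}^2+\tfrac1{2\eta}\|\w_t-\w_{t+1}\|_{H_t}^2$, together with the three-point identity $\langle H_t(\w_{t+1}-\w_t),\w_{t+1}-\w\rangle=\B_{\phi_t}(\w_{t+1},\w_t)+\B_{\phi_t}(\w_{t+1},\w)-\B_{\phi_t}(\w_t,\w)$, which cancels $\tfrac1{2\eta}\|\w_t-\w_{t+1}\|_{H_t}^2=\tfrac1\eta\B_{\phi_t}(\w_{t+1},\w_t)$ and leaves exactly $\tfrac\eta2\|\g_t\|_{H_t^{\ast}}^2+\tfrac1\eta[\B_{\phi_t}(\w_t,\w)-\B_{\phi_t}(\w_{t+1},\w)]$. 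The same identity applied to $\langle\tht-\theta_{t+1},\theta_{t+1}-\theta\rangle$ converts the dual term into $\tfrac1{2\beta}(\|\theta-\tht\|^2-\|\theta-\theta_{t+1}\|^2)-\tfrac\beta2\|A\w_{t+1}+B\v_{t+1}-\b\|^2$. The step I expect to be the main obstacle is reconciling this leftover $-\tfrac\beta2\|A\w_{t+1}+B\v_{t+1}-\b\|^2$ with the cross term $-\beta\langle B\vt-B\v_{t+1},A\w_{t+1}-A\w\rangle$: writing $p=A\w_{t+1}+B\v_{t+1}-\b$, $q=A\w+B\v_{t+1}-\b$ and $r=A\w+B\vt-\b$, so that $A\w_{t+1}-A\w=p-q$ and $B\vt-B\v_{t+1}=r-q$, completing the square gives $-\beta\langle r-q,p-q\rangle-\tfrac\beta2\|p\|^2=\tfrac\beta2(\|r\|^2-\|q\|^2)-\tfrac\beta2\|r+p-q\|^2\le\tfrac\beta2(\|r\|^2-\|q\|^2)$, which is exactly the $\beta$-term on the right-hand side. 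Collecting all estimates yields the stated inequality; positive definiteness of each $H_t$ enters only so that $\|\cdot\|_{H_t}$ and $\|\cdot\|_{H_t^{\ast}}$ are genuine norms and Fenchel--Young applies.
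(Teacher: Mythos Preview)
Your proposal is correct and follows essentially the same route as the paper's proof: convexity of $\ell$ and $\varphi$, the optimality conditions of the $\w$- and $\v$-subproblems, the dual update identity $A\w_{t+1}+B\v_{t+1}-\b=\tfrac1\beta(\tht-\theta_{t+1})$, Young's inequality in the $H_t$-geometry, and a squared-norm expansion for the cross term. The only cosmetic differences are that the paper packages the $\w$-optimality step by citing Lemma~1 of \cite{ouyang2013stochastic} (your three-point identity argument reproves that lemma in place), and the paper expands $\beta\langle A\w-A\w_{t+1},B\vt-B\v_{t+1}\rangle$ directly rather than through your $p,q,r$ substitution; your completing-the-square identity with $r+p-q=A\w_{t+1}+B\vt-\b$ is exactly the term the paper drops.
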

\begin{proof}
Firstly, using the convexity of $\ell$ and the definition of $\delta_t$, we can obtain
\bqs
&&\hspace{-0.3in}\ell(\wt) - \ell(\w) \leq \langle \ell'(\wt), \wt - \w\rangle = \langle \g_{t}, \w_{t+1} - \w\rangle + \langle \delta_{t}, \w - \wt\rangle + \langle \g_{t}, \wt - \w_{t+1}\rangle.
\eqs
Combining the above inequality with the relation between $\theta_t$ and $\theta_{t+1}$ will derive
\bqs
&&\hspace{-0.3in}\ell(\wt) - \ell(\w) + \langle\w_{t+1} - \w, -A^{\top}\theta_{t+1}\rangle  \\
&&\hspace{-0.3in}\leq \langle\g_t, \w_{t+1} -\w\rangle+ \langle \delta_t, \w-\wt\rangle + \langle\g_t, \wt - \w_{t+1}\rangle +\langle \w_{t+1} - \w, A^{\top}[\beta(A\w_{t+1} + B\v_{t+1} - \b) - \tht]\rangle\nonumber\\
&&\hspace{-0.3in} = \underbrace{\langle\g_t + A^{\top}[\beta(A\w_{t+1} + B\v_{t} - \b) - \tht], \w_{t+1} -\w\rangle}_{L_t}  + \underbrace{\langle \w - \w_{t+1}, \beta A^{\top}B(\vt - \v_{t+1})\rangle}_{M_t}+\langle \delta_t, \w-\wt\rangle \\
&&+\underbrace{ \langle\g_t, \wt - \w_{t+1}\rangle}_{N_t}.
\eqs
To provide an upper bound for the first term $L_t$, taking $D(\u, \v) = \B_{\phi_t}(\u, \v) = \frac{1}{2}\| \u - \v\|^2_{H_t}$ and applying Lemma 1 in \cite{ouyang2013stochastic} to the step of getting $\w_{t+1}$ in the Algorithm 1, we will have
\bqs
&&\hspace{-0.3in} \langle \ell(\wt, \xi_{t}) + A^{\top}[\beta(A\w_{t+1} + B\vt - \b) - \tht], \w_{t+1} - \w\rangle \leq \frac{1}{\eta}[ \B_{\phi_t}(\w_t,\w)-\B_{\phi_t}(\w_{t+1},\w) \hspace{-0.05in} - \B_{\phi_t}(\w_{t+1},\w_t)].
\eqs
To provide an upper bound for the second term $M_t$, we can derive as follows
\bqs
&&\hspace{-0.3in}  \langle \w - \w_{t+1}, \beta A^{\top}B(\vt - \v_{t+1})\rangle = \beta\langle A\w - A\w_{t+1}, B\vt - B\v_{t+1}\rangle\\
&&\hspace{-0.3in} = \frac{\beta}{2}[(\| A\w+B\vt - \b\|^2 - \| A\w + B\v_{t+1} - \b\|^2)+(\| A\w_{t+1} + B\v_{t+1} - \b\|^2- \| A\w_{t+1} + B\vt - \b\|^2)]\\
&&\hspace{-0.3in} \label{eq4}\leq \frac{\beta}{2}(\| A\w + B\vt - \b\|^2 - \| A\w + B\v_{t+1} - \b\|^2) + \frac{1}{2\beta} \| \theta_{t+1} - \tht\|^2.
\eqs
To drive an upper bound for the final term $N_t$, we can use Young's inequality to get
\bqs
&&\langle\g_t, \wt- \w_{t+1}\rangle \leq \frac{\eta \|\g_t\|^2_{H_t^{*}}}{2} + \frac{\|\wt - \w_{t+1}\|^2_{H_t}}{2\eta}=\frac{\eta \|\g_t\|^2_{H_t^{*}}}{2} + \frac{\B_{\phi_t}(\wt, \w_{t+1})}{\eta}.
\eqs
Replacing the terms $L_t$, $M_t$ and $N_t$ with their upper bounds, we will get
\bqs
&&\hspace{-0.3in}\ell(\wt) - \ell(\w) + \langle \w_{t+1} - \w, -A^{\top}\theta_{t+1}\rangle \leq \frac{1}{\eta}[\B_{\phi_t}( \wt,\w) - \B_{\phi_t} \w_{t+1}, \w)] + \frac{\eta \|\g_t\|^2_{H_t^{*}}}{2} + \langle \delta_t, \w-\wt\rangle \\
&&\hspace{2in} + \frac{\beta}{2}(\| A\w + B\vt - \b\|^2 - \| A\w + B\v_{t+1} - \b\|^2) + \frac{1}{2\beta} \| \theta_{t+1} - \tht\|^2.
\eqs
Due to the optimality condition of the step of updating $\v$ in Algorithm 1, i.e., $\partial_\v\L_{\beta,t}(\w_{t+1},\v_{t+1},\theta_t)$ and the convexity of $\varphi$, we have
\bqs
\varphi(\v_{t+1}) - \varphi(\v) + \langle \v_{t+1} - \v, -B^{\top}\theta_{t+1}\rangle \leq 0.
\eqs

Using the fact $A\w_{t+1}+B\v_{t+1}-b=(\theta_t-\theta_{t+1})/\beta$, we have
\bqs
&&\hspace{-0.3in}\langle \theta_{t+1} - \theta,A\w_{t+1} + B\v_{t+1} - \b \rangle  = \frac{1}{2\beta}(\| \theta - \tht\|^2\hspace{-0.05in} - \| \theta - \theta_{t+1}\|^2 \hspace{-0.05in}- \| \theta_{t+1} - \tht\|^2).
\eqs
Combining the above three inequalities and re-arranging the terms will conclude the proof.
\end{proof}
Given the above lemma, now we can analyze the convergence behavior of Algorithm~\ref{alg:adaptive-stochastic-ADMM}. Specifically, we  provide an upper bound on the the objective value and the feasibility violation.
\begin{thm}\label{thm:general-cmd}
Let $\ell(\w,\xi_t)$ and $\varphi(\w)$ be convex functions, and $H_t$ be positive definite, for $t \geq 1$. Then for Algorithm~\ref{alg:adaptive-stochastic-ADMM}, we have the following inequality for any $T \geq 1$ and $\rho > 0$:
\bq\label{eqn:general-convergence}
&&\hspace{-0.3in}\E[f(\bar{\u}_T)-f(\u_*) + \rho\| A\bar{\w}_T + B\bar{\v}_T - \b\|] \nonumber\\
&&\hspace{-0.3in} \leq \frac{1}{2T}\Big(\E\sum_{t = 1}^{T}\big[\frac{2}{\eta}(\B_{\phi_t}(\wt,\w_{*}) - \B_{\phi_t}( \w_{t+1},\w_{*})) +\eta\| \g_{t}\|^2_{H_t^{*}}\big]  + \beta D_{\v_{*}, B}^2 + \frac{\rho^2}{\beta}\Big).
\eq
where $\bar{\u}_T = \left( \frac{1}{T}\sum_{t = 1}^{T}\wt^\top, \frac{1}{T}\sum_{t=2}^{T+1}\vt^\top \right)^\top$, $\u_*= (\w_*^\top, \v_*^\top)^\top$,  and $(\bar{\w}_T, \bar{\v}_T)  =(\frac{1}{T}\sum_{t=2}^{T+1} \wt, \frac{1}{T}\sum_{t=2}^{T+1} \vt)$, and $D_{\v_{*}, B} = \| B \v_{*}\|$.
\end{thm}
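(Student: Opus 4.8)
The plan is to instantiate the lemma at the test point $\z=(\w_*^\top,\v_*^\top,\theta^\top)^\top$ with $\theta$ left free for now, sum the resulting inequalities over $t=1,\dots,T$, and then convert the bound on $\sum_{t=1}^T[\ell(\wt)+\varphi(\v_{t+1})-\ell(\w_*)-\varphi(\v_*)]$ into a bound on $f(\bar{\u}_T)-f(\u_*)$ by Jensen's inequality: convexity of $\ell$ and $\varphi$ together with the definition $\bar{\u}_T=(\frac1T\sum_{t=1}^T\wt^\top,\frac1T\sum_{t=2}^{T+1}\vt^\top)^\top$ gives $f(\bar{\u}_T)\le\frac1T\sum_{t=1}^T[\ell(\wt)+\varphi(\v_{t+1})]$, so it suffices to control the summed left-hand side of the lemma.

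The first substantive step is to remove the coupling term $\sum_{t=1}^T(\z_{t+1}-\z)^\top F(\z_{t+1})$. Writing $F(\z)=M\z+(\mathbf{0}^\top,\mathbf{0}^\top,-\b^\top)^\top$, the linear part $M$ (with off-diagonal blocks $-A^\top,-B^\top$ and $A,B$) is skew-symmetric, so $(\z_{t+1}-\z)^\top M(\z_{t+1}-\z)=0$ and hence $(\z_{t+1}-\z)^\top F(\z_{t+1})=(\z_{t+1}-\z)^\top F(\z)$. Feasibility of $\u_*$, i.e.\ $A\w_*+B\v_*=\b$, makes the third block of $F(\z)$ vanish, so that $\sum_{t=1}^T(\z_{t+1}-\z)^\top F(\z_{t+1})=-T\,\theta^\top(A\bar{\w}_T+B\bar{\v}_T-\b)$ with $\bar{\w}_T=\frac1T\sum_{t=2}^{T+1}\wt$ and $\bar{\v}_T=\frac1T\sum_{t=2}^{T+1}\vt$, exactly the averages in the statement.

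Next I would telescope the remaining right-hand-side terms. The $\beta$-terms collapse to $\frac\beta2(\|A\w_*+B\v_1-\b\|^2-\|A\w_*+B\v_{T+1}-\b\|^2)\le\frac\beta2\|A\w_*+B\v_1-\b\|^2$, and the initialization $\v_1=\mathbf{0}$ with feasibility gives $\|A\w_*+B\v_1-\b\|^2=\|B\v_*\|^2=D_{\v_*,B}^2$; the $\theta$-terms collapse to $\frac1{2\beta}(\|\theta-\theta_1\|^2-\|\theta-\theta_{T+1}\|^2)\le\frac1{2\beta}\|\theta\|^2$ using $\theta_1=\mathbf{0}$; while the Bregman differences $\frac1\eta\sum_t[\B_{\phi_t}(\wt,\w_*)-\B_{\phi_t}(\w_{t+1},\w_*)]$ and $\sum_t\frac\eta2\|\g_t\|_{H_t^{*}}^2$ are simply carried over, since the $H_t$ change. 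Collecting terms, for every $\theta$ we get $f(\bar{\u}_T)-f(\u_*)+\theta^\top(A\bar{\w}_T+B\bar{\v}_T-\b)\le\frac1T\sum_{t=1}^T\big(\frac\eta2\|\g_t\|_{H_t^{*}}^2+\frac1\eta[\B_{\phi_t}(\wt,\w_*)-\B_{\phi_t}(\w_{t+1},\w_*)]+\langle\delta_t,\w_*-\wt\rangle\big)+\frac\beta{2T}D_{\v_*,B}^2+\frac1{2\beta T}\|\theta\|^2$. Now choose, along each sample path, $\theta=-\rho(A\bar{\w}_T+B\bar{\v}_T-\b)/\|A\bar{\w}_T+B\bar{\v}_T-\b\|$ when the residual is nonzero and $\theta=\mathbf{0}$ otherwise; this is legitimate because the lemma holds pointwise in $\z$. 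Then $\theta^\top(A\bar{\w}_T+B\bar{\v}_T-\b)=-\rho\|A\bar{\w}_T+B\bar{\v}_T-\b\|$ and $\|\theta\|^2\le\rho^2$, turning the left side into $f(\bar{\u}_T)-f(\u_*)+\rho\|A\bar{\w}_T+B\bar{\v}_T-\b\|$. Finally, taking expectations, $\E\langle\delta_t,\w_*-\wt\rangle=0$ because $\wt$ is measurable with respect to $\xi_1,\dots,\xi_{t-1}$ and $\E[\g_t\mid\xi_1,\dots,\xi_{t-1}]=\ell'(\wt)$; pulling out a factor of $\frac12$ from the bracket then gives \eqref{eqn:general-convergence}.

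The hard part is the bookkeeping around $F$ and the choice of $\theta$: correctly invoking skew-symmetry so that $F$ is evaluated at $\z$ rather than $\z_{t+1}$; keeping straight that the objective side averages $\wt$ over $t=1,\dots,T$ whereas the feasibility side averages over $t=2,\dots,T+1$ (so that $\bar{\u}_T$ and $(\bar{\w}_T,\bar{\v}_T)$ genuinely differ in their first block); and checking that the data-dependent $\theta$ is admissible while still leaving $\sum_t\langle\delta_t,\w_*-\wt\rangle$ a martingale-difference sum that integrates to zero.
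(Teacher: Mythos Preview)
Your argument is correct and follows essentially the same route as the paper: sum Lemma~1 at $(\w_*,\v_*,\theta)$, telescope the $\beta$- and $\theta$-terms using the initializations $\v_1=\mathbf{0}$, $\theta_1=\mathbf{0}$, use Jensen plus the skew-symmetry of the linear part of $F$ (what the paper calls ``monotonicity of $F$'') to pass to the averaged point, optimize over $\|\theta\|\le\rho$, and then take expectations to kill $\langle\delta_t,\w_*-\wt\rangle$. The only slip is a sign typo in your ``collecting terms'' line (it should read $-\theta^\top(A\bar{\w}_T+B\bar{\v}_T-\b)$, consistent with your own computation of $\sum_t(\z_{t+1}-\z)^\top F(\z_{t+1})$ and with your subsequent choice of $\theta$), but the argument itself is sound.
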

\begin{proof}
For convenience, we  denote $\u=(\w^\top,\v^\top)^\top$, $\bar{\theta}_T=\frac{1}{T}\sum^{T+1}_{t=2}\theta_t$, and $\bar{\z}_T=(\bar{\w}_T^\top, \bar{\v}_T^\top, \bar{\theta}_T^\top)^\top$. With these notations, using convexity of $\ell(\w)$ and $\varphi(\v)$ and the monotonicity of operator $F(\cdot)$, we have for any $\z$:
\bqs
f(\bar{\u}_T)- f(\u) + (\bar{\z}_T - \z)^{\top}F(\bar{\z}_T) &\leq &\frac{1}{T}\sum_{t = 1}^T[f((\w_{t}^\top,\v_{t+1}^\top)^\top) - f(\u) + (\z_{t+1} - \z)^{\top}F(\z_{t+1})]\\
&= & \frac{1}{T}\sum_{t = 1}^{T}[\ell(\wt) + \varphi(\v_{t+1}) - \ell(\w) - \varphi(\v) + (\z_{t+1} - \z)^{\top}F(\z_{t+1})].
\eqs

Combining this inequality with Lemma 1 at the optimal solution $(\w, \v) = (\w_{*}, \v_{*})$, we can derive
{\small\begin{align}\notag
&f(\bar{\u}_T)-f(\u_*) + (\bar{\z}_T-\z_*)^\top F(\bar{\z}_T) \\\notag
\leq& \frac{1}{T}\sum_{t=1}^{T}\big\{\frac{1}{\eta}[\B_{\phi_t}(\wt, \w_{*})- \B_{\phi_t}( \w_{t+1}, \w_{*})]+\frac{\eta\| \g_t\|_{H_t^{\ast}}^2}{2} + \langle \delta_t, \w_{*} - \wt \rangle + \frac{\beta}{2}(\| A\w_{*} + B\vt - \b \|^2 \\\notag
&- \| A\w_{*} + B\v_{t+1} - \b\|^2)  + \frac{1}{2\beta}(\| \theta - \tht\|^2 - \| \theta - \theta_{t+1}\|^2)\big\} \\\notag
\leq& \frac{1}{T}\Big\{\sum_{t=1}^{T}\big[\frac{1}{\eta}[\B_{\phi_t}( \wt,\w_{*}) - \B_{\phi_t}(\w_{t+1}, \w_{*})]+\frac{\eta\| \g_t\|_{H_t^{\ast}}^2}{2} +\langle \delta_t, \w_{*}\hspace{-0.05in} - \wt \rangle \big]+ \frac{\beta}{2}\| A\w_{*} + B\v_1\hspace{-0.05in} - \b \|^2 + \frac{1}{2\beta}\| \theta \hspace{-0.05in} - \theta_1\|^2\Big\}\\\notag
 \leq& \frac{1}{T}\Big\{\sum_{t=0}^{T-1}\big[\frac{1}{\eta}(\B_{\phi_t}( \wt,\w_{*}) - \B_{\phi_t}(\w_{t+1}, \w_{*}))+\frac{\eta\| \g_t\|_{H_t^{\ast}}^2}{2} +\langle \delta_t, \w_{*} - \wt \rangle \big] + \frac{\beta}{2}D_{\v_{*}, B}^2
 + \frac{1}{2\beta}\| \theta - \theta_1\|^2\Big\}.
\end{align}}
Because the above inequality is valid for any $\theta$, it also holds in the ball $\mathrm{B}_\rho= \{\theta: \| \theta \| \leq \rho\}$. Combining with the fact that the optimal solution must also be feasible, it follows that
\begin{align}\notag
&\max_{\theta \in \mathrm{B}_\rho}\{f(\bar{\u}_T)-f(\u_*) + (\bar{\z}_T-\z_*)^\top F(\bar{\z}_T)\}\\\notag
=& \max_{\theta \in \mathrm{B}_\rho}\{f(\bar{\u}_T)-f(\u_*)+ \bar{\theta}_T^{\top}(A\w_{*} + B\v_{*} - \b)- \theta^{\top}(A\bar{\w}_T + B\bar{\v}_{T} - \b)\}\\\notag
=& \max_{\theta \in \mathrm{B}_\rho}\{f(\bar{\u}_T) - f(\u_*) - \theta^{\top}(A\bar{\w}_T + B\bar{\v}_{T} - \b)\}=f(\bar{\u}_T)-f(\u_*) + \rho\| A\bar{\w}_T + B\bar{\v}_T - \b\|.
\end{align}
Combining the above two inequalities and taking expectation, we have
\begin{align}\notag
& \E[f(\bar{\u}_T)-f(\u_*) + \rho\| A\bar{\w}_T + B\bar{\v}_T - \b\|]\\\notag
\leq&  \frac{1}{T}\E\Big\{\sum_{t=1}^{T}\big( \frac{1}{\eta}[\B_{\phi_t}(\wt,\w_{*}) - \B_{\phi_t}( \w_{t+1},\w_{*})]+ \frac{\eta\| \g_t\|_{H_t^{\ast}}^2}{2} )+ \langle \delta_t, \w_{*} - \wt \rangle\big) + \frac{\beta}{2}D_{\v_{*}, B}^2 + \frac{1}{2\beta }\| \theta - \theta_1\|^2\Big\}\\\notag
\le& \frac{1}{2T}\Big\{\E\sum_{t = 1}^{T}[\frac{2}{\eta}[\B_{\phi_t}(\wt,\w_{*}) - \B_{\phi_t}( \w_{t+1},\w_{*})]+\eta\| \g_t\|^2_{H_t^{*}}]  + \beta D_{\v_{*}, B}^2 + \frac{\rho^2}{\beta}\Big\},
\end{align}
where we used the fact $\E\delta_t =0$ in the last step. This completes the proof.
\end{proof}

The above theorem allows us to derive regret bounds for a family of algorithms that iteratively modify the proximal functions $\phi_t$ in attempt to lower the regret bounds. Since the rate of convergence is still dependent on $H_t$ and $\eta$, next we are going to choose appropriate positive definite matrix $H_t$ and the constant $\eta$ to optimize the rate of convergence.

\subsection{Diagonal Matrix Proximal Functions}
In this subsection, we restrict  $H_t$ as a diagonal matrix, for two  reasons:  (i) the diagonal matrix will provide results easier to understand than that for the general matrix; (ii) for high dimension problem the general matrix may result in prohibitively expensive computational cost, which is not desirable.

Firstly, we notice that the upper bound in the Theorem~\ref{thm:general-cmd} relies on $\sum^{T}_{t=1}\|\g_{t}\|^2_{H_t^{*}}$. If we assume all the $\g_t$'s are known in advance, we could minimize this term by setting $H_t=diag(\s)$, $\forall t$. We shall use the following proposition.
\begin{prop}
For any $\g_1,\g_2,\ldots,\g_{T}\in \R^{d_1}$, we have
\bqs
\min_{diag(\s)\succeq 0,\ \ones^\top\s \le c}  \sum^{T}_{t=1}\|\g_t\|^2_{diag(\s)}=\frac{1}{c}\big(\sum_{i = 1}^{d_1} \| \g_{1:T, i}\|\big)^2,
\eqs
where $\g_{1:T,i}=(g_{1,i},\ldots,g_{T,i})^\top$ and the minimum is attained at $s_i = c\| \g_{1:T, i}\|/\sum_{j = 1}^{d_1}\| \g_{1:T, j}\|$.
\end{prop}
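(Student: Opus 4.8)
The plan is to recognize this as a constrained optimization problem that decouples coordinate-wise, solve it with Lagrange multipliers (or more elementarily with the Cauchy--Schwarz inequality), and then verify the claimed minimizer.

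First I would rewrite the objective in a more transparent form. For a diagonal matrix $H = \mathrm{diag}(\s)$ with $\s \succeq 0$, we have $\|\g_t\|^2_{\mathrm{diag}(\s)^{*}} = \g_t^\top \mathrm{diag}(\s)^{-1} \g_t = \sum_{i=1}^{d_1} g_{t,i}^2 / s_i$ (interpreting terms with $s_i = 0$ appropriately, i.e., forcing $s_i > 0$ whenever the $i$-th coordinate is not identically zero). Summing over $t$ and interchanging the order of summation gives
\bqs
\sum_{t=1}^{T} \|\g_t\|^2_{\mathrm{diag}(\s)^{*}} = \sum_{i=1}^{d_1} \frac{1}{s_i}\sum_{t=1}^{T} g_{t,i}^2 = \sum_{i=1}^{d_1} \frac{\|\g_{1:T,i}\|^2}{s_i}.
\eqs
So the problem becomes $\min \big\{ \sum_{i=1}^{d_1} \|\g_{1:T,i}\|^2 / s_i : s_i \ge 0,\ \sum_{i=1}^{d_1} s_i \le c \big\}$.

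Next I would solve this reduced problem. The cleanest route is Cauchy--Schwarz: for any feasible $\s$,
\bqs
\Big(\sum_{i=1}^{d_1} \|\g_{1:T,i}\|\Big)^2 = \Big(\sum_{i=1}^{d_1} \frac{\|\g_{1:T,i}\|}{\sqrt{s_i}}\cdot \sqrt{s_i}\Big)^2 \le \Big(\sum_{i=1}^{d_1} \frac{\|\g_{1:T,i}\|^2}{s_i}\Big)\Big(\sum_{i=1}^{d_1} s_i\Big) \le c \sum_{i=1}^{d_1} \frac{\|\g_{1:T,i}\|^2}{s_i},
\eqs
which rearranges to the claimed lower bound $\sum_{i} \|\g_{1:T,i}\|^2/s_i \ge \frac{1}{c}\big(\sum_i \|\g_{1:T,i}\|\big)^2$. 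Equality in Cauchy--Schwarz holds iff $\sqrt{s_i}$ is proportional to $\|\g_{1:T,i}\|/\sqrt{s_i}$, i.e. $s_i \propto \|\g_{1:T,i}\|$; combined with the constraint being tight ($\sum_i s_i = c$, which is clearly optimal since increasing any $s_i$ only decreases the objective) this forces $s_i = c\|\g_{1:T,i}\|/\sum_{j=1}^{d_1}\|\g_{1:T,j}\|$. Substituting this choice back in and simplifying confirms the minimum value is exactly $\frac{1}{c}\big(\sum_{i=1}^{d_1}\|\g_{1:T,i}\|\big)^2$, so the bound is attained.

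The main thing to be careful about — rather than a deep obstacle — is the degenerate case where some coordinate $\g_{1:T,i}$ is the zero vector (then the optimal $s_i = 0$, and the corresponding term $0/0$ should be read as $0$; one either restricts to the support of the gradients or takes a limiting argument), and the boundary issue of $\mathrm{diag}(\s)$ being only positive semidefinite rather than positive definite, which is why the statement uses $\succeq 0$. Alternatively, one can avoid Cauchy--Schwarz entirely and use Lagrange multipliers: the stationarity condition $-\|\g_{1:T,i}\|^2/s_i^2 = -\lambda$ gives $s_i \propto \|\g_{1:T,i}\|$ directly, and convexity of $s \mapsto 1/s$ on $(0,\infty)$ guarantees this critical point is the global minimum. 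Either way the argument is short and elementary.
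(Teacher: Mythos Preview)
Your argument is correct. The paper itself omits the proof entirely (``We omit proof of this proposition, since it is easy to derive''), so there is nothing to compare against; your Cauchy--Schwarz derivation (equivalently, the Lagrange-multiplier computation you sketch) is exactly the standard short route and would be the expected proof here.

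One small remark worth making explicit in your write-up: you correctly read $\|\g_t\|^2_{\mathrm{diag}(\s)}$ as $\g_t^\top \mathrm{diag}(\s)^{-1}\g_t$, i.e.\ the dual norm squared. The paper's own notation $\|\cdot\|_G = \sqrt{\cdot^\top G\,\cdot}$ would literally give $\sum_i s_i g_{t,i}^2$, which has minimum zero, so the statement as printed has a typo; the surrounding context (the quantity being bounded is $\sum_t\|\g_t\|^2_{H_t^*}$, and the companion full-matrix Proposition~\ref{p1} uses $S^{-1}$ explicitly) together with the claimed optimum confirm your reading. It would be worth flagging this when you present the proof.
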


We omit proof of this proposition, since it is easy to derive. Since we do not have all the $\g_t$'s in advance, we  receives the stochastic (sub)gradients $\g_t$ sequentially instead. As a result, we propose to update the $H_t$ incrementally as:
\bqs
H_t= a I + diag(\s_t),
\eqs
where $s_{t,i}=\|\g_{1:t,i}\|$ and $a\ge 0$. For these $H_t$s, we have the following inequality
\bq\label{eqn:bound-gradient-square-norm}
&&\hspace{-0.25in} \sum^{T}_{t=1}\|\g_{t}\|^2_{H_t^*}=\sum^{T}_{t=1}\langle\g_{t}, (aI+diag(\s_t))^{-1}\g_{t}\rangle  \le \sum^{T}_{t=1}\langle\g_{t}, diag(\s_t)^{-1}\g_{t}\rangle\le 2\sum^{d_1}_{i=1}\|\g_{1:T,i}\|,
\eq
where the last inequality used the Lemma 4 in~\cite{duchi2011adaptive}, which implies this update is a nearly optimal update method for the diagonal matrix case. Finally, the adaptive stochastic  ADMM with diagonal matrix update ({\bf Ada-SADMM$_{diag}$}) is summarized into the Algorithm~\ref{alg:adaptive-stochastic-ADMM-diagonal}.
\begin{algorithm}[htpb]
\caption{ Adaptive Stochastic ADMM with Diagonal Matrix Update ({\bf Ada-SADMM$_{diag}$}).}\label{alg:adaptive-stochastic-ADMM-diagonal}
\begin{algorithmic}
\STATE  {\bf Initialize:} $\w_1= \mathbf{0}$, $\u_1= \mathbf{0}$ , $\theta_1 = \mathbf{0}$, and $a>0$.
\FOR{$t = 1, 2, \dots, T$}
\STATE Compute $\g_t=\ell'(\w_t,\xi_t)$;
\STATE Update $H_t = a I + diag(\s_t),\ {\rm where}\ s_{t,i}=\|\g_{1:t,i}\|$;
\STATE $\w_{t+1} =\arg\min_{\w}$ $\L_{\beta, t}(\w, \vt, \tht)$;
\STATE $\v_{t+1} =\arg\min_{\v \in \V}$ $\L_{\beta, t}(\w_{t+1}, \v,    \tht)$;
\STATE $\theta_{t+1}=\tht - \beta(A\w_{t+1} + B\v_{t+1} - \b)$;
\ENDFOR
\end{algorithmic}
\end{algorithm}

For the convergence rate of the proposed Algorithm 2, we have the following specific theorem.
\begin{thm}
Let $\ell(\w,\xi_t)$ and $\varphi(\w)$ be convex functions for any $t>0$. Then for Algorithm~\ref{alg:adaptive-stochastic-ADMM-diagonal}, we have the following inequality for any $T\ge 1$ and $\rho>0$
\begin{align}\notag
&\E[f(\bar{\u}_T)-f(\u_*)+\rho\|A\bar{\w}_T+B\bar{\v}_T-\b\|] \\\notag
\le& \frac{1}{2T}\Big(\E[ 2\eta\sum^{d_1}_{i=1}\|\g_{1:T,i}\|+\frac{2}{\eta}\max_{t\le T}\|\w_t-\w_*\|^2_{\infty}\sum^{d_1}_{i=1}\|\g_{1:T,i}\| ] + \beta D_{\v_{*}, B}^2 + \frac{\rho^2}{\beta}\Big).
\end{align}
If we further set $\eta=D_{\w,\infty}/\sqrt{2}$ where $D_{\w,\infty}=\max_{\w,\w'}\|\w-\w'\|_{\infty}$, then we have
\bqs
&&\hspace{-0.3in}\E[f(\bar{\u}_T)-f(\u_*)+\rho\|A\bar{\w}_T+B\bar{\v}_T-\b\|] \le\frac{1}{T}\big(\sqrt{2}\E[D_{\w,\infty}\sum^{d_1}_{i=1}\|\g_{1:T,i}\| ] + \frac{\beta}{2} D_{\v_{*}, B}^2 + \frac{\rho^2}{2\beta}\big).
\eqs
\end{thm}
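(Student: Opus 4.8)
The plan is to instantiate Theorem~\ref{thm:general-cmd} with the diagonal choice $H_t = aI + \mathrm{diag}(\s_t)$, $s_{t,i}=\|\g_{1:t,i}\|$, and then to control the two data-dependent quantities inside the bracket of \eqref{eqn:general-convergence}: the cumulative dual-norm term $\sum_{t=1}^T\eta\|\g_t\|_{H_t^*}^2$ and the telescoped Bregman term $\sum_{t=1}^T\frac{2}{\eta}[\B_{\phi_t}(\w_t,\w_*)-\B_{\phi_t}(\w_{t+1},\w_*)]$. The remaining pieces ($\beta D_{\v_*,B}^2$, $\rho^2/\beta$, and the $\E\delta_t=0$ cancellation) carry over verbatim from Theorem~\ref{thm:general-cmd}.

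The gradient term is essentially already done by \eqref{eqn:bound-gradient-square-norm}: since $s_{t,i}=\|\g_{1:t,i}\|$ is nondecreasing in $t$, we have $(aI+\mathrm{diag}(\s_t))^{-1}\preceq\mathrm{diag}(\s_t)^{-1}$, and Lemma~4 of \cite{duchi2011adaptive} gives $\sum_{t=1}^T\|\g_t\|_{H_t^*}^2\le 2\sum_{i=1}^{d_1}\|\g_{1:T,i}\|$, hence $\sum_{t=1}^T\eta\|\g_t\|_{H_t^*}^2\le 2\eta\sum_{i=1}^{d_1}\|\g_{1:T,i}\|$, which is the first term in the claimed bound.

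The substantive step is the Bregman term. Writing $\B_{\phi_t}(\w,\u)=\tfrac12\|\w-\u\|_{H_t}^2$, I would apply summation by parts to $\sum_{t=1}^T(\|\w_t-\w_*\|_{H_t}^2-\|\w_{t+1}-\w_*\|_{H_t}^2)$: shifting the index of the subtracted sum rewrites it as $\|\w_1-\w_*\|_{H_1}^2-\|\w_{T+1}-\w_*\|_{H_T}^2+\sum_{t=2}^T\|\w_t-\w_*\|_{H_t-H_{t-1}}^2$. Two facts finish it: (i) monotonicity, $H_t-H_{t-1}=\mathrm{diag}(\s_t-\s_{t-1})\succeq 0$ because each $s_{t,i}$ is nondecreasing, so the middle sum is a genuine sum of squared seminorms and the term $-\|\w_{T+1}-\w_*\|_{H_T}^2\le 0$ may be discarded; and (ii) for any positive semidefinite diagonal matrix $D$ one has $\|\x\|_D^2\le\|\x\|_\infty^2\,\mathrm{tr}(D)$. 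Applying (ii) to each summand with the uniform bound $\|\w_t-\w_*\|_\infty^2\le\max_{t\le T}\|\w_t-\w_*\|_\infty^2$, and using that the traces telescope, $\mathrm{tr}(H_1)+\sum_{t=2}^T\mathrm{tr}(H_t-H_{t-1})=\mathrm{tr}(H_T)=ad_1+\sum_{i=1}^{d_1}\|\g_{1:T,i}\|$, yields $\sum_{t=1}^T\frac{2}{\eta}[\B_{\phi_t}(\w_t,\w_*)-\B_{\phi_t}(\w_{t+1},\w_*)]\le\frac{1}{\eta}\max_{t\le T}\|\w_t-\w_*\|_\infty^2(ad_1+\sum_{i=1}^{d_1}\|\g_{1:T,i}\|)$. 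Substituting this and the gradient bound into \eqref{eqn:general-convergence}, with the lower-order $ad_1$ contribution of the safeguard term $aI$ absorbed, gives the first displayed inequality.

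For the second inequality I would bound $\max_{t\le T}\|\w_t-\w_*\|_\infty^2\le D_{\w,\infty}^2$ (valid since the iterates and $\w_*$ lie in the domain whose $\ell_\infty$-diameter is $D_{\w,\infty}$), factor out the common $\sum_i\|\g_{1:T,i}\|$, and note that its coefficient $2\eta+D_{\w,\infty}^2/\eta$ is minimized over $\eta>0$ at $\eta=D_{\w,\infty}/\sqrt 2$ with value $2\sqrt 2\,D_{\w,\infty}$; substituting back and simplifying the $\tfrac{1}{2T}$ prefactor produces $\tfrac1T\big(\sqrt 2\,\E[D_{\w,\infty}\sum_i\|\g_{1:T,i}\|]+\tfrac{\beta}{2}D_{\v_*,B}^2+\tfrac{\rho^2}{2\beta}\big)$. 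I expect the summation-by-parts step to be the main obstacle: getting the index shift and the two boundary terms right, and, crucially, verifying that each increment $H_t-H_{t-1}$ is positive semidefinite — that is exactly what licenses pulling $\max_{t\le T}\|\w_t-\w_*\|_\infty^2$ outside the sum before bounding each term by a trace; everything else is bookkeeping on top of Theorem~\ref{thm:general-cmd} and \eqref{eqn:bound-gradient-square-norm}.
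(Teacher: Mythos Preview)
Your proposal is correct and follows essentially the same route as the paper: summation by parts on the Bregman telescope, the diagonal monotonicity $H_t-H_{t-1}=\mathrm{diag}(\s_t-\s_{t-1})\succeq 0$, the elementary bound $\|\x\|_D^2\le\|\x\|_\infty^2\,\mathrm{tr}(D)$, and \eqref{eqn:bound-gradient-square-norm} for the dual-norm term, then optimizing over $\eta$. One small difference worth noting: the paper exploits that the $aI$ piece cancels in every increment $H_{t+1}-H_t$, so only the boundary term $\|\w_1-\w_*\|_{H_1}^2$ carries an $a$ contribution (which the paper then bounds by $\|\w_1-\w_*\|_\infty^2\,\s_1^\top\ones$, tacitly dropping the $a\|\w_1-\w_*\|^2$ part); your version instead bounds the initial term by $\|\w_1-\w_*\|_\infty^2\,\mathrm{tr}(H_1)$ and therefore accrues the full $ad_1$ in $\mathrm{tr}(H_T)$, which you then ``absorb.'' Both treatments are equally informal about this safeguard term, and your acknowledgment of it is, if anything, more honest. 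Also note that your Bregman bound actually yields the sharper coefficient $\tfrac{1}{\eta}$ rather than the $\tfrac{2}{\eta}$ printed in the first display; the paper's proof does the same, and it is this sharper constant that makes the optimization at $\eta=D_{\w,\infty}/\sqrt{2}$ give $2\sqrt{2}\,D_{\w,\infty}$ and hence the stated second display.
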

\begin{proof}
We have the following inequality
\bqs
&&\hspace{-0.3in}2\sum_{t = 1}^{T}[\B_{\phi_t}(\wt,\w_{*}) - \B_{\phi_t}( \w_{t+1},\w_{*})]=\sum_{t = 1}^{T}(\| \wt -\w_{*}\|_{H_t}^2 - \| \w_{t+1} -\w_{*}\|_{H_t}^2)\\
&&\hspace{-0.3in}\le\| \w_1 -\w_{*}\|_{H_1}^2+\sum_{t = 1}^{T-1} (\|\w_{t+1}-\w_*\|^2_{H_{t+1}}-\|\w_{t+1}-\w_*\|^2_{H_t})\\
&&\hspace{-0.3in}=\| \w_1 -\w_{*}\|_{H_1}^2+ \sum_{t = 1}^{T-1} \langle\w_{t+1}-\w_*, diag(\s_{t+1}-\s_t)\w_{t+1}-\w_*\rangle\\
&&\hspace{-0.3in}\le\| \w_1 -\w_{*}\|_{H_1}^2+ \sum_{t = 1}^{T-1} \max_i(\w_{t+1,i}-\w_{*,i})^2\|\s_{t+1}-\s_t\|_1\\
&&\hspace{-0.3in}=\| \w_1 -\w_{*}\|_{H_1}^2+ \sum_{t = 1}^{T-1} \|\w_{t+1}-\w_*\|^2_{\infty}(\s_{t+1}-\s_t)^\top\ones\\
&&\hspace{-0.3in}\le\| \w_1 -\w_{*}\|_{H_1}^2+\max_{t\le T}\|\w_t-\w_*\|^2_{\infty}\s_{T}^\top\ones -\|\w_1-\w_*\|^2_{\infty}\s_1^\top\ones\le\max_{t\le T}\|\w_t-\w_*\|^2_{\infty}\sum^{d_1}_{i=1}\|\g_{1:T,i}\|,
\eqs
where the last inequality used  $\langle \s_{T},\ones \rangle=\sum^{d_1}_{i=1}\|\g_{1:T,i}\|$ and $\| \w_1 -\w_{*}\|_{H_1}^2\le\|\w_1-\w_*\|^2_{\infty}\s_1^\top\ones $.

Plugging the above inequality and the inequality~(4) into the inequality~(2), will conclude the first part of the theorem. Then the second part is trivial to be derived.
\end{proof}

\begin{rema}{\rm
For the example of sparse random data, assume that at each round $t$, feature $i$ appears with probability $p_i = {\rm min}\;\{1, ci^{-\alpha}\}$ for some $\alpha \geq 2$ and a constant $c$. Then
\bqs
&& \E[\sum_{i = 1}^d \| g_{1:T, i}\|] = \sum_{i = 1}^d \E[\sqrt{|\{t : |g_{t, i}| = 1\}|}] \leq \sum_{i = 1}^d\sqrt{\E|\{t : |g_{t, i}| = 1\}|} = \sum_{i = 1}^d \sqrt{Tp_i}.
\eqs
In this case, the convergence rate equals $O(\frac{{\rm log}\; d}{\sqrt{T}})$.}
\end{rema}

\subsection{Full Matrix Proximal Functions}
In this subsection, we derive and analyze new updates when we estimate a full matrix $H_t$ for the proximal function instead of a diagonal one. Although full matrix computation may not be attractive for high dimension problems, it may be helpful for tasks with low dimension. Furthermore, it will provide us with a more complete insight. Similar with the analysis for the diagonal case, we first introduce the following proposition (Lemma 15 in~\cite{duchi2011adaptive}).
\begin{prop}\label{p1}
For any $\g_1,\g_2,\ldots,\g_T\in \R^{d_1}$, we have the following inequality
\bqs
\min_{S \succeq 0,\; {\rm tr}(S) \leq c}  \sum^{T}_{t=1}\|\g_t\|^2_{S^{-1}} = \frac{1}{c} tr(G_T)
\eqs
where, $ G_T = \sum_{t=1}^T \g_t\g_t^{\top}$. and  the minimizer is attained at $S = cG_T^{1/2}/{\rm tr}(G_T^{1/2})$. If $G_T$ is not of full rank, then we use its pseudo-inverse to replace its inverse in the minimization problem.
\end{prop}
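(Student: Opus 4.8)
The plan is to reduce this matrix optimization to a single trace inequality and then read off the minimizer. First I would put the objective in trace form: since $\|\g_t\|_{S^{-1}}^2=\g_t^\top S^{-1}\g_t=\mathrm{tr}(S^{-1}\g_t\g_t^\top)$, summing over $t$ gives $\sum_{t=1}^T\|\g_t\|_{S^{-1}}^2=\mathrm{tr}(S^{-1}G_T)$ with $G_T=\sum_{t=1}^T\g_t\g_t^\top\succeq 0$, so the problem becomes $\min\{\mathrm{tr}(S^{-1}G_T): S\succeq 0,\ \mathrm{tr}(S)\le c\}$. (I note in passing that, as the stated minimizer $S=cG_T^{1/2}/\mathrm{tr}(G_T^{1/2})$ already makes clear, the right-hand side should read $\tfrac1c\big(\mathrm{tr}(G_T^{1/2})\big)^2$ rather than $\tfrac1c\,\mathrm{tr}(G_T)$; this is Lemma~15 of \cite{duchi2011adaptive}, and the argument below reproduces it, so one could alternatively just cite that reference.)

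The core step is a lower bound valid for every feasible $S\succ 0$, obtained from the Cauchy--Schwarz inequality for the Frobenius inner product $\langle X,Y\rangle_F=\mathrm{tr}(X^\top Y)$. Using cyclicity of the trace, $\mathrm{tr}(G_T^{1/2})=\mathrm{tr}\big(S^{-1/2}G_T^{1/2}S^{1/2}\big)=\langle G_T^{1/2}S^{-1/2},\,S^{1/2}\rangle_F\le \|G_T^{1/2}S^{-1/2}\|_F\,\|S^{1/2}\|_F$, and since $\|G_T^{1/2}S^{-1/2}\|_F^2=\mathrm{tr}(S^{-1/2}G_TS^{-1/2})=\mathrm{tr}(S^{-1}G_T)$ and $\|S^{1/2}\|_F^2=\mathrm{tr}(S)\le c$, squaring yields $\mathrm{tr}(S^{-1}G_T)\ge\big(\mathrm{tr}(G_T^{1/2})\big)^2/c$. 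For attainment I would invoke the equality case of Cauchy--Schwarz, which forces $G_T^{1/2}S^{-1/2}\propto S^{1/2}$, hence $S\propto G_T^{1/2}$; normalizing so that $\mathrm{tr}(S)=c$ gives exactly $S=cG_T^{1/2}/\mathrm{tr}(G_T^{1/2})$, and substituting back gives $S^{-1}G_T=\tfrac{\mathrm{tr}(G_T^{1/2})}{c}G_T^{1/2}$, whose trace matches the lower bound. A quick KKT check ($-S^{-1}G_TS^{-1}+\lambda I=0$, i.e. $S=G_T^{1/2}/\sqrt\lambda$, with $\lambda$ fixed by the active constraint $\mathrm{tr}(S)=c$) confirms this is the unique stationary point of what is a convex program in $S$ over the PSD cone.

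The main obstacle is the degenerate case flagged in the statement, when $G_T$ is rank-deficient: then the infimum over $S\succ 0$ need not be attained, and one must argue that any portion of the trace budget spent on directions outside $\mathrm{range}(G_T)$ is wasted, restrict $S$ to act on $\mathrm{range}(G_T)$, and replace $G_T^{-1/2}$ by the pseudoinverse $G_T^{\dagger/2}$ throughout; the Cauchy--Schwarz computation above then goes through verbatim on that subspace, with $\mathrm{tr}(G_T^{1/2})$ still equal to the sum of square roots of the nonzero eigenvalues. Everything else is routine linear algebra (correctness of the matrix square root, cyclicity, the equality condition in Cauchy--Schwarz for non-symmetric arguments under the symmetric Frobenius product), so I would keep those verifications brief.
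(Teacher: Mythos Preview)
Your argument is correct, including the catch that the right-hand side should read $\tfrac{1}{c}\big(\mathrm{tr}(G_T^{1/2})\big)^2$ rather than $\tfrac{1}{c}\,\mathrm{tr}(G_T)$. The paper itself gives no proof and simply cites Lemma~15 of \cite{duchi2011adaptive}; your Cauchy--Schwarz argument in the Frobenius inner product is precisely the proof appearing there, so your proposal and the paper's reference coincide.
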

Because the (sub)gradients are received sequentially, we propose to update the $H_t$ incrementally as
\bqs
H_t=aI+G_t^{\frac{1}{2}},
\eqs
where $G_t = \sum_{i=1}^t \g_i\g_i^{\top}$, $t = 1, \dots, T$. For these $H_t$s, we have the following inequalities
\bq\label{eqn:bound-gradient-norm-with-full-matrix}
&&\hspace{-0.25in}\sum^{T}_{t=1}\|\g_t\|^2_{H_{t}^*}\le \sum_{t=1}^{T}\parallel \g_t\parallel_{S_t^{-1}}^2 \leq 2\sum_{t = 1}^{T}\parallel \g_t\parallel^2_{S_T^{-1}}= 2{\rm tr}(G_T^{1/2}),
\eq
where the last inequality used the Lemma 10 in~\cite{duchi2011adaptive}, which implies this update is a nearly optimal update method for the full matrix case. Finally, the adaptive stochastic  ADMM with full matrix update can be summarized into the Algorithm~\ref{alg:adaptive-stochastic-ADMM-full}.
\begin{algorithm}[htpb]
\caption{ Adaptive Stochastic ADMM with Full Matrix Update ({\bf Ada-SADMM$_{full}$}).}\label{alg:adaptive-stochastic-ADMM-full}
\begin{algorithmic}
\STATE  {\bf Initialize:} $\w_1= \mathbf{0}$, $\u_1= \mathbf{0}$, $\theta_1 = \mathbf{0}$, $G_0= 0$, and $a>0$
\FOR{$t = 1, 2, \dots, T$}
\STATE Compute $\g_t=\ell'(\w_t,\xi_t)$ and update $G_t = G_{t-1} + \g_t\g_t^{\top}$;
\STATE Update $H_t = a I + S_t,\quad {\rm where}\; S_t = G_t^{\frac{1}{2}}$;
\STATE $\w_{t+1} =\arg\min_{\w}$ $\L_{\beta, t}(\w, \vt, \tht)$;
\STATE $\v_{t+1} =\arg\min_{\v \in \V}$ $\L_{\beta, t}(\w_{t+1}, \v,    \tht)$;
\STATE $\theta_{t+1}=\tht - \beta(A\w_{t+1} + B\v_{t+1} - \b)$;
\ENDFOR
\end{algorithmic}
\end{algorithm}

For the convergence rate of the above proposed Algorithm 3, we have the following specific theorem.
\begin{thm}
Let $l(\w, \xi_t)$ and $\varphi(\w)$ are convex functions for any $t>0$. Then for Algorithm \ref{alg:adaptive-stochastic-ADMM-full}, we have the following inequality for any $T \geq 1$, $\rho > 0$,
\begin{align}\notag
&\mathbb{E}[f(\bar{\u}_T) - f(\u_{*}) + \rho \parallel A\bar{\w}_T + B\bar{\v}_T - \b \parallel] \\\notag
\leq& \frac{1}{2T}\big(\mathbb{E}[2\eta {\rm tr}\;(G_T^{1/2}) + \frac{1}{\eta}{\rm max}_{t \leq T}\; \| \w_* - \w_t\|^2{\rm tr}\; (G_T^{\frac{1}{2}})] + \beta D_{\v_{*}, B}^2 + \frac{\rho^2}{\beta} \big).
\end{align}
Furthermore, if we  set $\eta=D_{\w,2}/2$, where $D_{\w,2}=\max_{\w_1,\w_2}\|\w_1-\w_2\|$, then we have
\bqs
&&\hspace{-0.3in}\E[f(\bar{\u}_T)-f(\u_*)+\rho\|A\bar{\w}_T+B\bar{y}_T-\b\|] \le \frac{1}{T}\big(\sqrt{2}\E[D_{\w,2}{\rm tr}\; (G_T^{1/2})] + \frac{\beta}{2} D_{\v_{*}, B}^2 + \frac{\rho^2}{2\beta}\big).
\eqs
\end{thm}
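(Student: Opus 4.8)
The plan is to specialize Theorem~\ref{thm:general-cmd} to the choice $H_t=aI+G_t^{1/2}$ used in Algorithm~\ref{alg:adaptive-stochastic-ADMM-full}, and to control the two $H_t$-dependent quantities on the right-hand side of~\eqref{eqn:general-convergence}: the stochastic-gradient term $\sum_{t=1}^{T}\eta\|\g_t\|_{H_t^{\ast}}^2$, and the Bregman-divergence telescoping term, whose coefficient $\frac{2}{\eta}$ combined with $2\B_{\phi_t}(\x,\y)=\|\x-\y\|_{H_t}^2$ turns it into $\frac{1}{\eta}\sum_{t=1}^{T}\big(\|\wt-\w_*\|_{H_t}^2-\|\w_{t+1}-\w_*\|_{H_t}^2\big)$. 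The whole argument parallels the proof of Theorem~3 for the diagonal case, with the coordinatewise monotonicity of $\s_t$ there replaced by operator monotonicity of the matrix square root.

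The gradient term is handled for free: inequality~\eqref{eqn:bound-gradient-norm-with-full-matrix} already gives $\sum_{t=1}^{T}\|\g_t\|_{H_t^{\ast}}^2\le 2\,{\rm tr}(G_T^{1/2})$, hence a contribution of at most $2\eta\,{\rm tr}(G_T^{1/2})$.

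For the Bregman term I would perform an Abel summation exactly as in Theorem~3: writing $u_t=\wt-\w_*$,
\[
\sum_{t=1}^{T}\big(u_t^{\top}H_tu_t-u_{t+1}^{\top}H_tu_{t+1}\big)\le \|u_1\|_{H_1}^2+\sum_{t=1}^{T-1}u_{t+1}^{\top}(H_{t+1}-H_t)u_{t+1},
\]
after dropping the nonpositive term $-u_{T+1}^{\top}H_Tu_{T+1}$. Now $H_{t+1}-H_t=G_{t+1}^{1/2}-G_t^{1/2}$, and since $G_{t+1}=G_t+\g_{t+1}\g_{t+1}^{\top}\succeq G_t$, the L\"owner--Heinz operator-monotonicity inequality yields $G_{t+1}^{1/2}\succeq G_t^{1/2}$, so $H_{t+1}-H_t\succeq 0$. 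For a positive semidefinite $M$ and any $v$ one has $v^{\top}Mv\le\|v\|^2\lambda_{\max}(M)\le\|v\|^2\,{\rm tr}(M)$; applying this with $v=u_{t+1}$ and $M=G_{t+1}^{1/2}-G_t^{1/2}$, pulling out $\max_{t\le T}\|\w_t-\w_*\|^2$, and telescoping $\sum_{t=1}^{T-1}{\rm tr}(G_{t+1}^{1/2}-G_t^{1/2})={\rm tr}(G_T^{1/2})-{\rm tr}(G_1^{1/2})$ gives the Abel sum a bound of $\max_{t\le T}\|\w_t-\w_*\|^2\,{\rm tr}(G_T^{1/2})$, once the boundary term $\|u_1\|_{H_1}^2$ is absorbed (bound it by $\|u_1\|^2(a+{\rm tr}(G_1^{1/2}))$ and fold the residue into $\max_{t\le T}\|\w_t-\w_*\|^2\,{\rm tr}(G_1^{1/2})$, as in the diagonal proof). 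Multiplying by $\frac{1}{\eta}$, this term contributes at most $\frac{1}{\eta}\max_{t\le T}\|\w_*-\w_t\|^2\,{\rm tr}(G_T^{1/2})$.

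Substituting both estimates into~\eqref{eqn:general-convergence}, whose right-hand side is already an expectation with the noise term $\langle\delta_t,\w_*-\wt\rangle$ eliminated via $\E\delta_t=0$, and dividing by $2T$, gives the first displayed inequality. The second one follows by tuning $\eta$ to balance $2\eta\,{\rm tr}(G_T^{1/2})$ against $\frac{1}{\eta}\max_{t\le T}\|\w_*-\w_t\|^2{\rm tr}(G_T^{1/2})$, using $\max_{t\le T}\|\w_*-\w_t\|\le D_{\w,2}$, precisely as $\eta$ was chosen in Theorem~3. The only real departure from the diagonal case is the step above: one must invoke operator monotonicity of $X\mapsto X^{1/2}$ to ensure $H_{t+1}-H_t\succeq 0$, and then trade $\lambda_{\max}(H_{t+1}-H_t)$ for ${\rm tr}(H_{t+1}-H_t)$ so that the per-step increments collapse into a single telescoping trace; everything else is bookkeeping identical to the proof of Theorem~3.
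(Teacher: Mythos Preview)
Your proposal is correct and follows the paper's proof essentially line by line: Abel-summing the Bregman telescoping term, bounding each increment $u_{t+1}^\top(G_{t+1}^{1/2}-G_t^{1/2})u_{t+1}$ by $\|u_{t+1}\|^2\lambda_{\max}(G_{t+1}^{1/2}-G_t^{1/2})\le\|u_{t+1}\|^2{\rm tr}(G_{t+1}^{1/2}-G_t^{1/2})$, telescoping the traces, absorbing the boundary term exactly as in the diagonal case, and combining with inequality~\eqref{eqn:bound-gradient-norm-with-full-matrix} inside Theorem~\ref{thm:general-cmd}. You are in fact slightly more explicit than the paper in invoking L\"owner--Heinz to ensure $G_{t+1}^{1/2}\succeq G_t^{1/2}$; the paper silently treats this as obvious.
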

\begin{proof}
We consider the sum of the difference
\bqs
&&\hspace{-0.3in}2\sum_{t = 1}^{T}[\B_{\phi_t}(\wt,\w_{*}) - \B_{\phi_t}( \w_{t+1},\w_{*})]=\sum_{t = 1}^{T}(\| \wt -\w_{*}\|_{H_t}^2 - \| \w_{t+1} -\w_{*}\|_{H_t}^2)\\
&&\hspace{-0.3in}\le\| \w_1 -\w_{*}\|_{H_1}^2 +\sum_{t = 1}^{T-1} (\|\w_{t+1}-\w_*\|^2_{H_{t+1}}-\|\w_{t+1}-\w_*\|^2_{H_t})\\
&&\hspace{-0.3in}=\| \w_1 -\w_{*}\|_{H_1}^2 + \sum_{t = 1}^{T-1} \langle\w_{t+1}-\w_*, (G_{t+1}^{\frac{1}{2}}-G_t^{\frac{1}{2}})(\w_{t+1}-\w_*)\rangle\\
&&\hspace{-0.3in}\le\| \w_1 -\w_{*}\|_{H_1}^2+ \sum_{t = 1}^{T-1}\|\w_{t+1}-\w_*\|^2\lambda_{max}(G_{t+1}^{\frac{1}{2}}-G_t^{\frac{1}{2}}) \\
&&\hspace{-0.3in}=\| \w_1 -\w_{*}\|_{H_1}^2+ \sum_{t = 1}^{T-1} \|\w_{t+1}-\w_*\|^2 tr(G_{t+1}^{\frac{1}{2}}-G_t^{\frac{1}{2}})\\
&&\hspace{-0.3in}\le\| \w_1 -\w_{*}\|_{H_1}^2 + \max_{t\le T-1}\|\w_t-\w_*\|^2 tr(G_T^{\frac{1}{2}})-\|\w_1-\w_*\|^2tr(G_1^{\frac{1}{2}})\le\max_{t\le T}\|\w_t-\w_*\|^2 tr(G_T^{\frac{1}{2}}).
\eqs

Plugging the above inequality and the inequality~(4) into the inequality~(2), will conclude the first part of the theorem. Then the second part is trivial to be derived.
\end{proof}

\section{Experiment}
In this section, we will evaluate the empirical performance of the proposed adaptive stochastic ADMM algorithms for solving GGSVM tasks, which is formulated as the following problem~\cite{ouyang2013stochastic}:
\bqs
&&\hspace{-0.3in}\min_{\w,\v}\frac{1}{n}\sum^n_{i=1}[1-y_i\x_i^\top\w]_++\frac{\gamma}{2}\|\w\|^2 + \nu\|\v\|_1,\quad s.t.\ F\w-\v=0,
\eqs
where $[z]_+=\max(0,z)$ and the matrix $F$ is constructed based on a graph $\mathcal{G} =\{\mathcal{V}, \mathcal{E}\}$. For this graph, $\mathcal{V}=\{w_1,\ldots,w_{d_1}\}$ is a set of variables and $\mathcal{E}=\{e_1,\ldots,e_{|\mathcal{E}|}\}$, where $e_k=\{i,j\}$ is assigned with a weight $\alpha_{ij}$. And the corresponding $F$ is in the form: $F_{ki}=\alpha_{ij}$ and $F_{kj}=-\alpha_{ij}$. To construct a graph for a given dataset, we adopt the sparse inverse covariance estimation~\cite{friedman2008sparse} and determine the sparsity pattern of the inverse covariance matrix $\Sigma^{-1}$. Based on the inverse covariance matrix, we connect all index pairs $(i,j)$ with $\Sigma^{-1}_{ij}\not=0$ and assign $\alpha_{ij}=1$.

\subsection{Experimental Testbed and Setup}
To  examine the performance, we test all the algorithms on 6  real-world datasets from web machine learning repositories, which are listed in the Table~\ref{tab:datasets}. ``news20'' is the ``20 Newsgroups'' downloaded from \footnote{\footnotesize \url{http://www.cs.nyu.edu/~roweis/data.html}}, while the other datasets can be downloaded from LIBSVM website\footnote{\footnotesize \url{http://www.csie.ntu.edu.tw/~cjlin/libsvmtools/datasets}}. For each dataset, we randomly divide it into two folds: training set with $80\%$ of examples and test set with the rest.
\begin{table}[htpb]
\renewcommand*\arraystretch{1.0}
\begin{center}
\caption{
Details of the real-world datasets in our experiments.}\label{tab:datasets}
\begin{tabular}{|l|r|r|r|r|r|r|}        \hline
Dataset  &a9a & mushrooms &  news20 & splice&svmguide3 & w8a\\
\hline\hline
$\#$ examples &48,842   & 8,124 & 16,242  & 3,175 &1,284 & 64,700\\
$\#$ features & 123  &  112  & 100  & 60 & 21 &  300
\\\hline
\end{tabular}
\end{center}
\renewcommand*\arraystretch{1.0}
\end{table}

To make a fair comparison, all algorithms adopt the same experimental setup. In particular, we set the penalty parameter $\gamma=\nu=1/n$, where $n$ is the number of training examples, and the trade-off parameter $\beta=1$. In addition,  we set the step size parameter $\eta_t=1/(\gamma t)$ for SADMM according to the theorem 2 in~\cite{ouyang2013stochastic}. Finally, the smooth parameter $a$ is set as $1$, and the step size for adaptive stochastic ADMM algorithms are searched from $2^{[-5:5]}$ using cross validation.

All the experiments were conducted with 5 different random seeds and 2 epochs ($2n$ iterations) for each dataset. All the result were reported by averaging over these 5 runs. We evaluated the learning performance by measuring objective values, i.e., $f(\u)$, and test error rates on the test datasets. In addition, we also evaluate computational efficiency of all the algorithms by their running time.  All experiments were run in Matlab over a machine of 3.4GHz CPU.

\subsection{Performance Evaluation}
The figure~\ref{fig:ADMM} shows the performance of all the algorithms in comparison over trials, from which we can draw several observations. Firstly, the left column shows the objective values of the three algorithms. We can observe that the two adaptive stochastic ADMM algorithms converge much faster than SADMM, which shows the effectiveness of exploration of adaptive (sub)gradient to accelerate stochastic ADMM. Secondly, compared with Ada-SADMM$_{diag}$, Ada-SADMM$_{full}$ achieves slightly smaller objective values on most of the datasets, which indicates full matrix is slightly more informative than the diagonal one. Thirdly, the central column provides test error rates of three algorithms, where we observe that the two adaptive algorithms achieve significantly smaller or comparable test error rates at $0.25$-th epoch  than SADMM at $2$-th epoch. This observation indicates that we can terminate the two adaptive algorithms earlier to save time and at the same time achieve similar performance compared with SADMM.   Finally, the right column shows the running time of three algorithms, which shows that during the learning process, the Ada-SADMM$_{full}$ is significantly slower while the Ada-SADMM$_{diag}$ is overall efficient compared with SADMM.  In summary, the Ada-SADMM$_{diag}$ algorithm achieves a good trade-off between the efficiency and effectiveness.

Table~\ref{tab:experiment} summarizes the performance of  all the compared algorithms over the 6 datasets, from which we can make similar observations. This again verifies the effectiveness of the proposed algorithms.

\begin{table}[htpb]
\renewcommand*\arraystretch{1.8}
\begin{center}
\caption{Evaluation of stochastic ADMM algorithms on the real-world  data sets.}\label{tab:experiment}
\begin{scriptsize}
\begin{tabular}{|l|c|c|c|c|c|c|c|}        \hline
\multirow{1}{*}{\footnotesize Algorithm}& \multicolumn{3}{|c|}{\bf a9a} & & \multicolumn{3}{|c|}{\bf mushrooms}\\
\cline{2-4} \cline{6-8}
 &   Objective value        &  Test error rate & ~Time (s)~ &&   Objective value        &  Test error rate & ~Time (s)~
\\
\hline\hline
SADMM	&	2.6002 	$\pm$	0.4271 	&	0.1646 	$\pm$	0.0075 	&	56.0914 	&&	0.7353 	$\pm$	0.2104 	&	0.0350 	$\pm$	0.0136 	&	7.6619 	\\
Ada-SADMM$_{diag}$	&	0.3550 	$\pm$	0.0001 	&	0.1501 	$\pm$	0.0012 	&	94.7619 	&&	0.0096 	$\pm$	0.0005 	&	0.0006 	$\pm$	0.0000 	&	13.0355 	\\
Ada-SADMM$_{full}$	&	0.3545 	$\pm$	0.0001 	&	0.1498 	$\pm$	0.0013 	&	622.4459 	&&	0.0091 	$\pm$	0.0002 	&	0.0002 	$\pm$	0.0003 	&	67.8198 	
\\\hline\hline
\multirow{1}{*}{\footnotesize Algorithm}& \multicolumn{3}{|c|}{\bf news20} & & \multicolumn{3}{|c|}{\bf splice}\\
\cline{2-4} \cline{6-8}
 &   Objective value        &  Test error rate & ~Time (s)~ &&   Objective value        &  Test error rate & ~Time (s)~
\\
\hline\hline
SADMM	&	0.5652 	$\pm$	0.0151 	&	0.1333 	$\pm$	0.0034 	&	13.2948 	&&	108.6823 	$\pm$	20.9655 	&	0.2454 	$\pm$	0.0322 	&	0.9821 	\\
Ada-SADMM$_{diag}$	&	0.3139 	$\pm$	0.0003 	&	0.1280 	$\pm$	0.0015 	&	22.4788 	&&	0.3793 	$\pm$	0.0054 	&	0.1578 	$\pm$	0.0059 	&	1.3674 	\\
Ada-SADMM$_{full}$	&	0.3204 	$\pm$	0.0007 	&	0.1284 	$\pm$	0.0016 	&	148.5242 	&&	0.3710 	$\pm$	0.0014 	&	0.1550 	$\pm$	0.0079 	&	7.0392 	
\\\hline\hline
\multirow{1}{*}{\footnotesize Algorithm}& \multicolumn{3}{|c|}{\bf svmguide3} & & \multicolumn{3}{|c|}{\bf w8a}\\
\cline{2-4} \cline{6-8}
  &   Objective value        &  Test error rate & ~Time (s)~ &&   Objective value        &  Test error rate & ~Time (s)~
\\
\hline\hline
SADMM	&	1.6143 	$\pm$	0.3123 	&	0.2161 	$\pm$	0.0052 	&	0.1288 	&&	0.3357 	$\pm$	0.0916 	&	0.0957 	$\pm$	0.0012 	&	191.7544 	\\
Ada-SADMM$_{diag}$	&	0.5163 	$\pm$	0.0046 	&	0.2056 	$\pm$	0.0060 	&	0.2014 	&&	0.1526 	$\pm$	0.0010 	&	0.0931 	$\pm$	0.0005 	&	326.1392 	\\
Ada-SADMM$_{full}$	&	0.5230 	$\pm$	0.0044 	&	0.2000 	$\pm$	0.0044 	&	0.4602 	&&	0.1469 	$\pm$	0.0006 	&	0.0929 	$\pm$	0.0003 	&	4027.1963 	
\\\hline
\end{tabular}
\end{scriptsize}
\end{center}
\end{table}

\begin{figure}[htpb]
\begin{center}
\mbox{
\includegraphics[width=2in,height=1.35in]{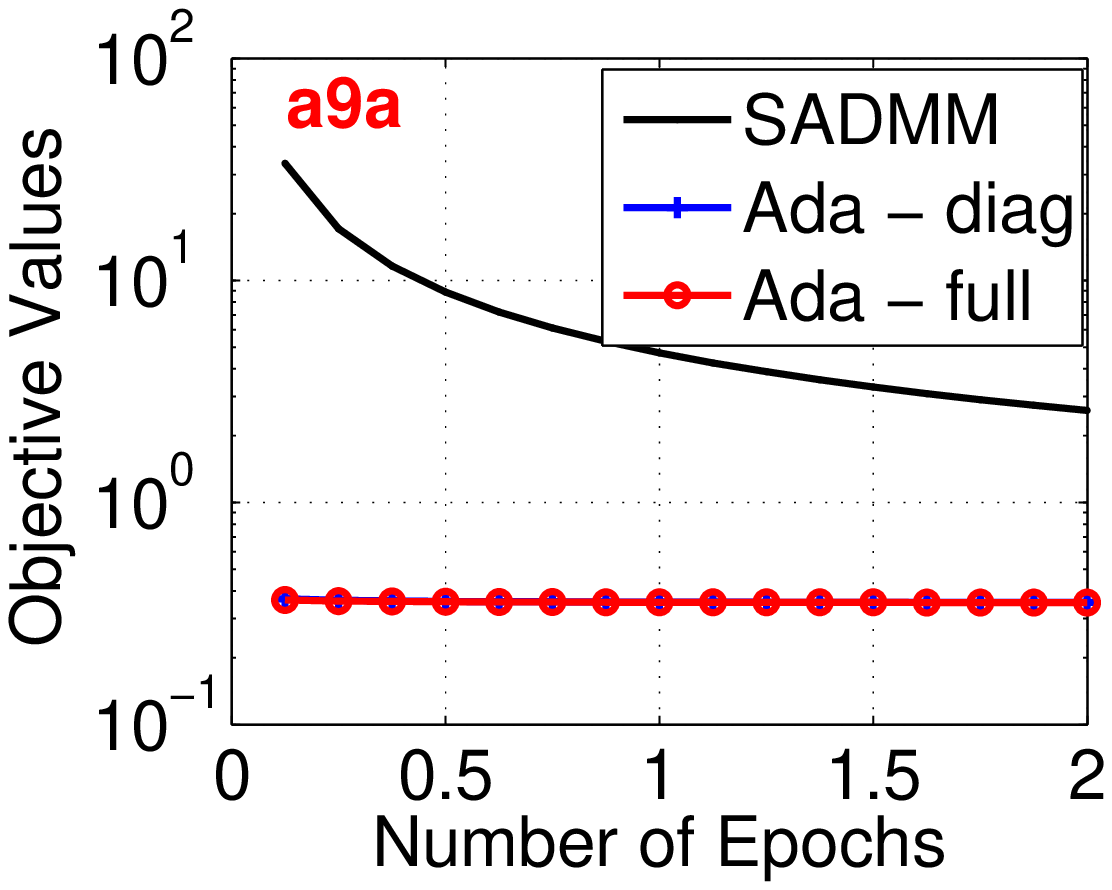}
\includegraphics[width=2in,height=1.35in]{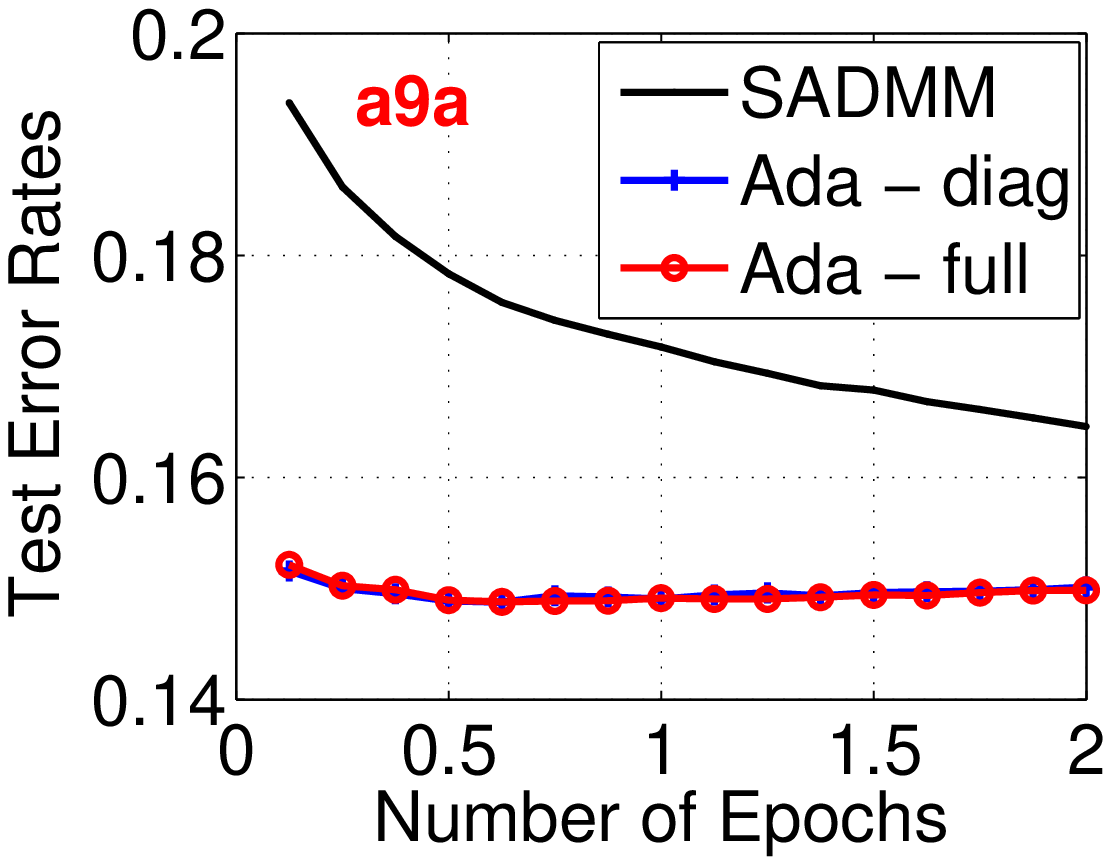}
\includegraphics[width=2in,height=1.35in]{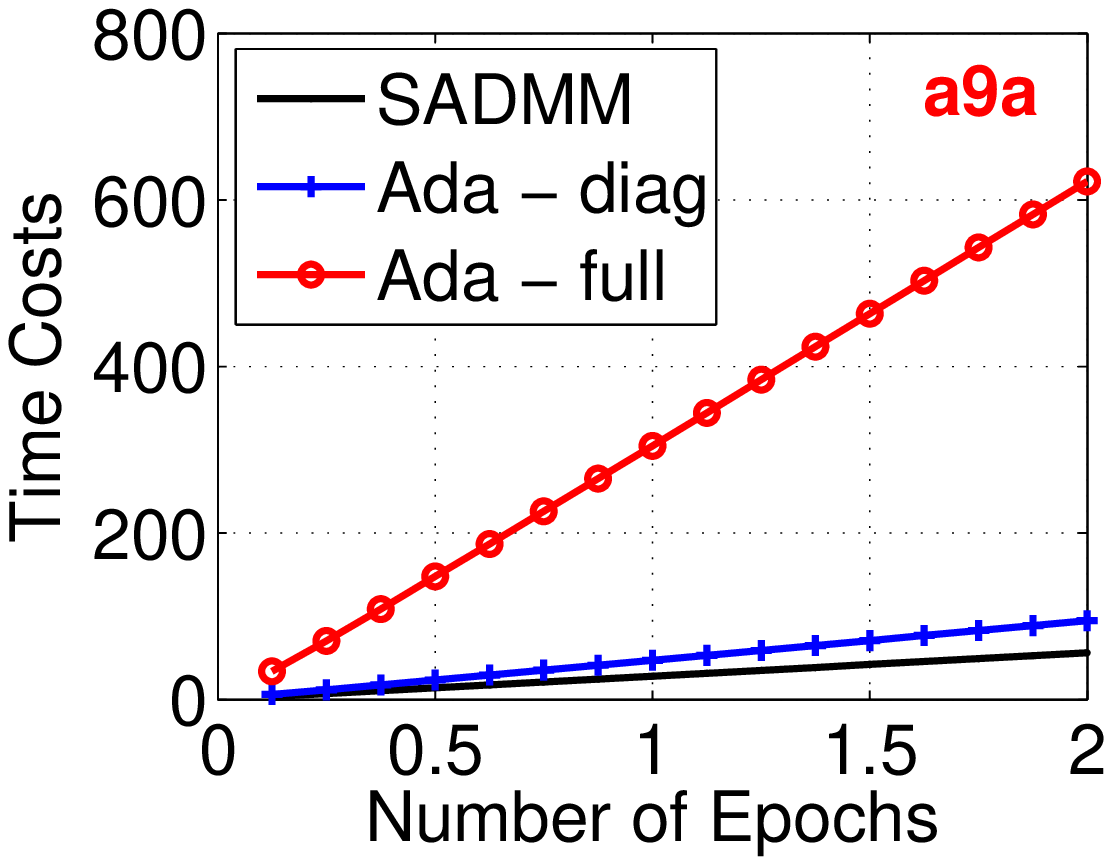}
}
\mbox{
\includegraphics[width=2in,height=1.35in]{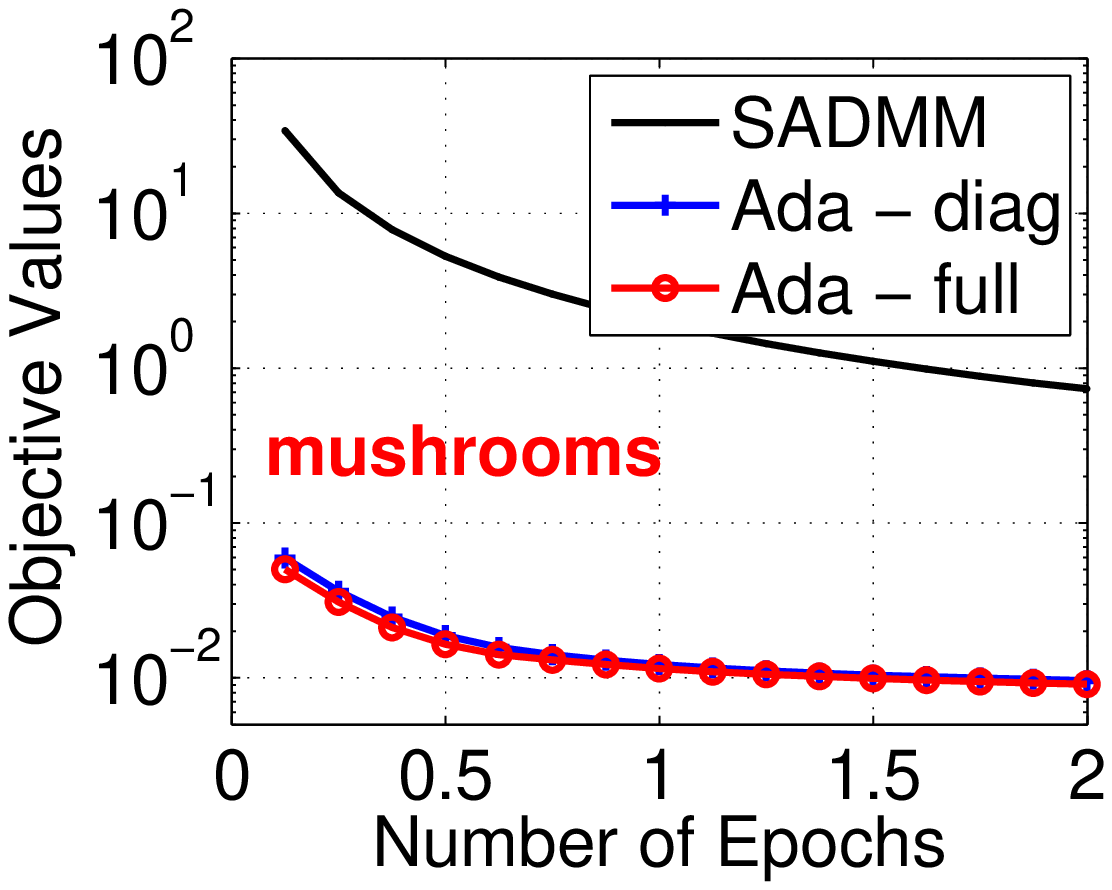}
\includegraphics[width=2in,height=1.35in]{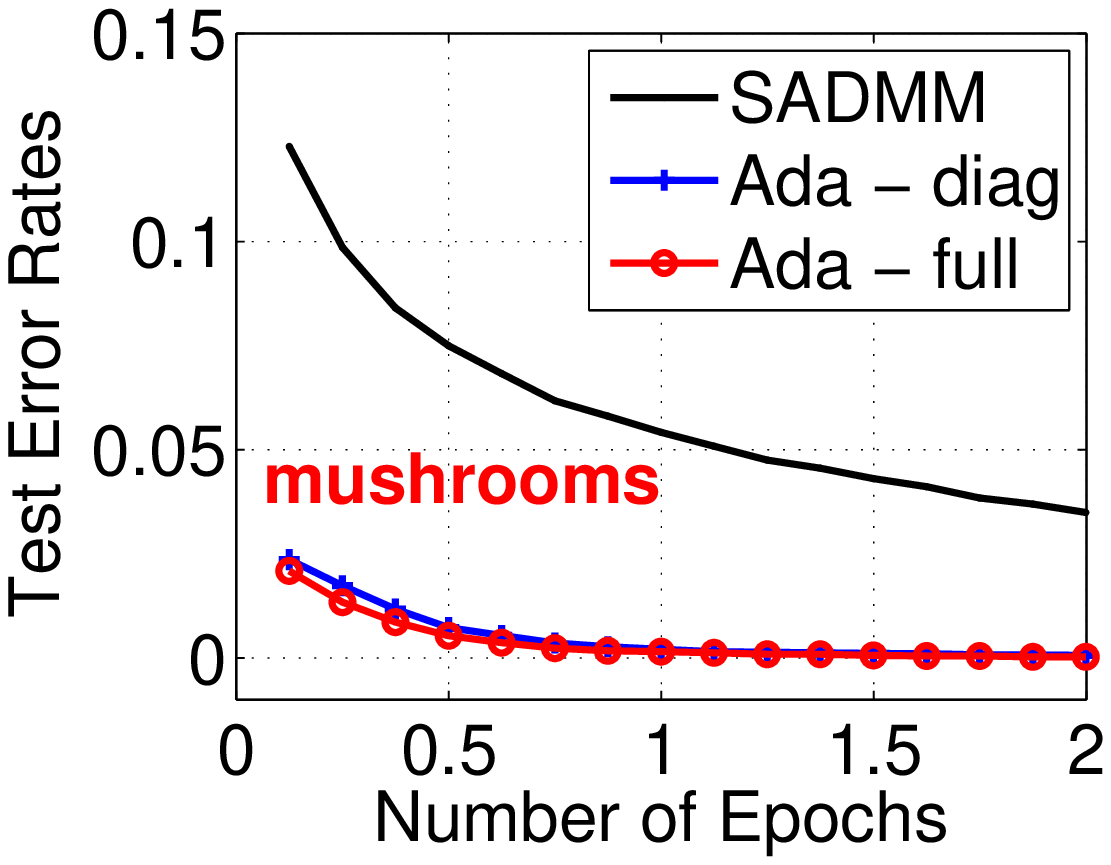}
\includegraphics[width=2in,height=1.35in]{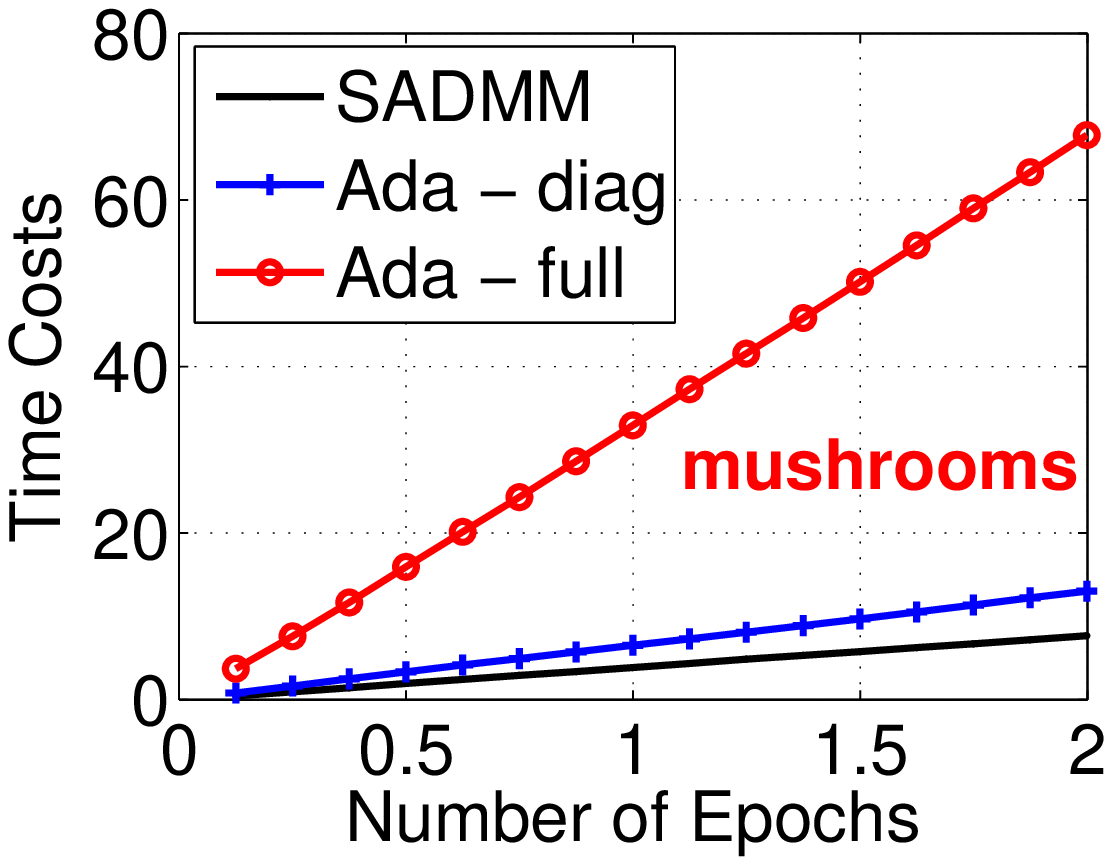}
}
\mbox{
\includegraphics[width=2in,height=1.35in]{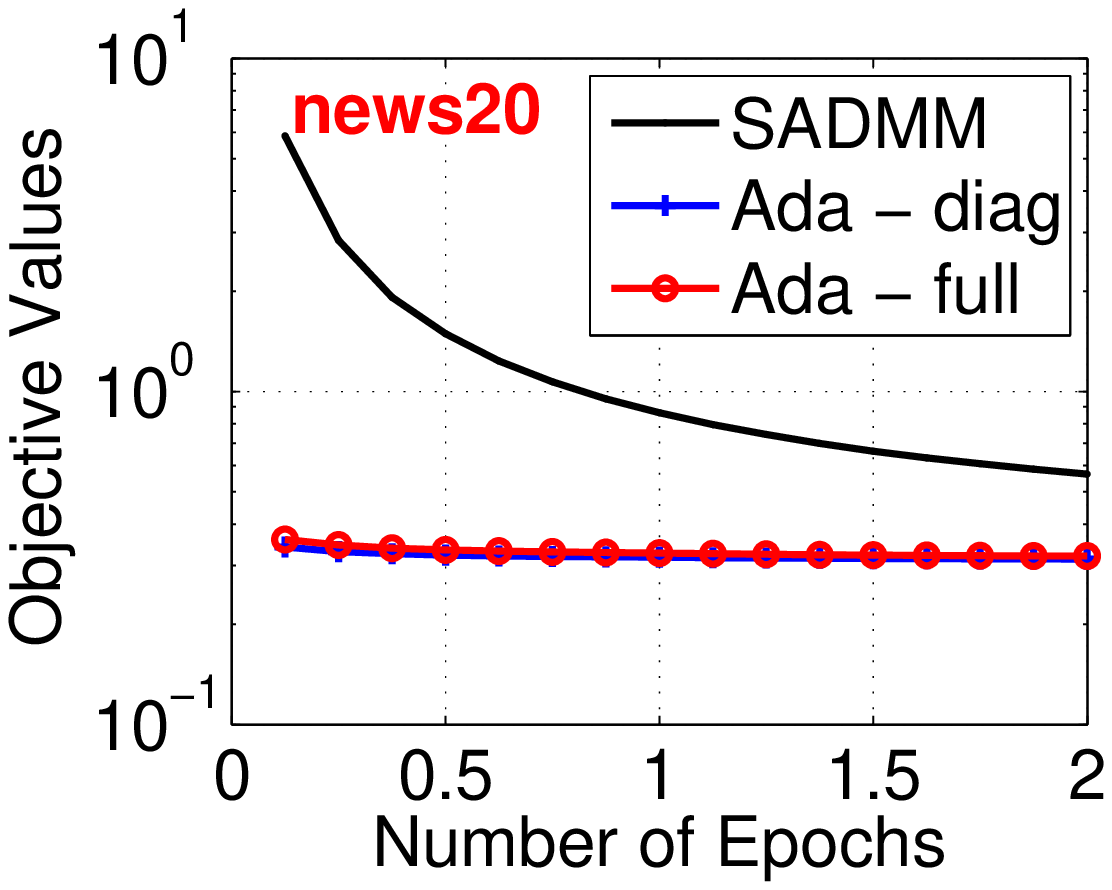}
\includegraphics[width=2in,height=1.35in]{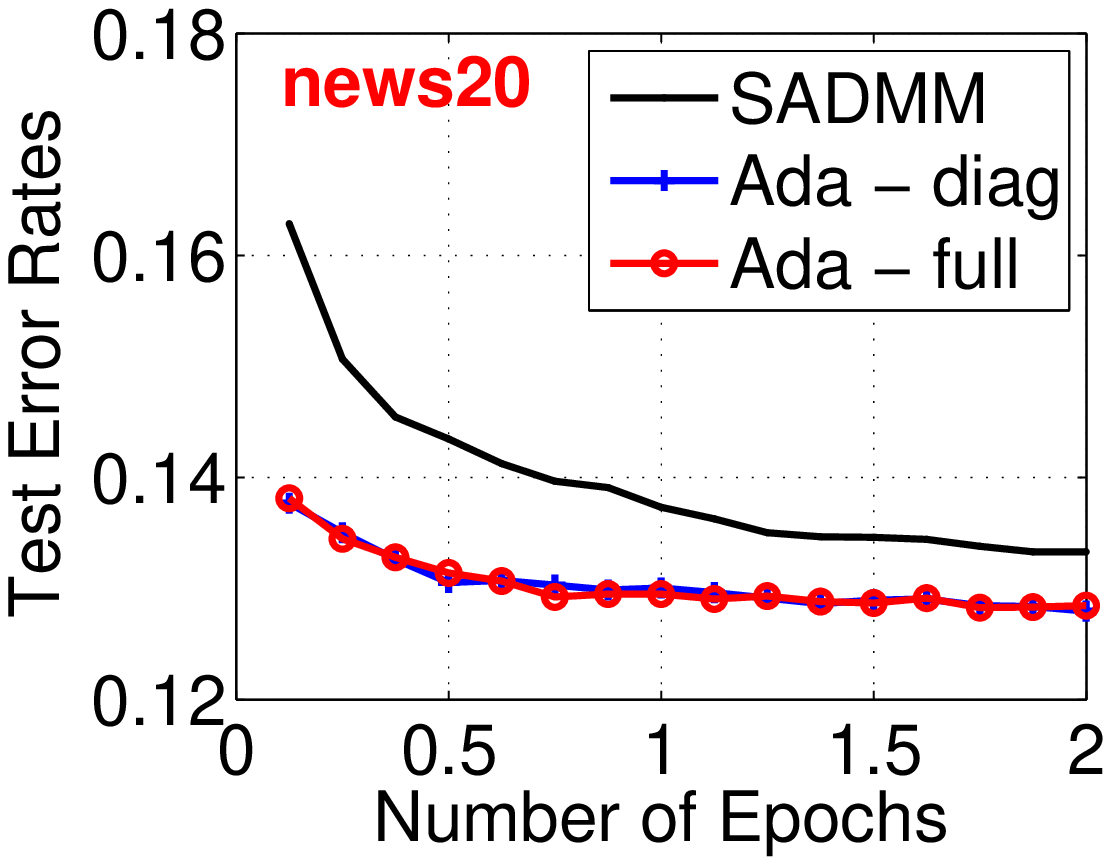}
\includegraphics[width=2in,height=1.35in]{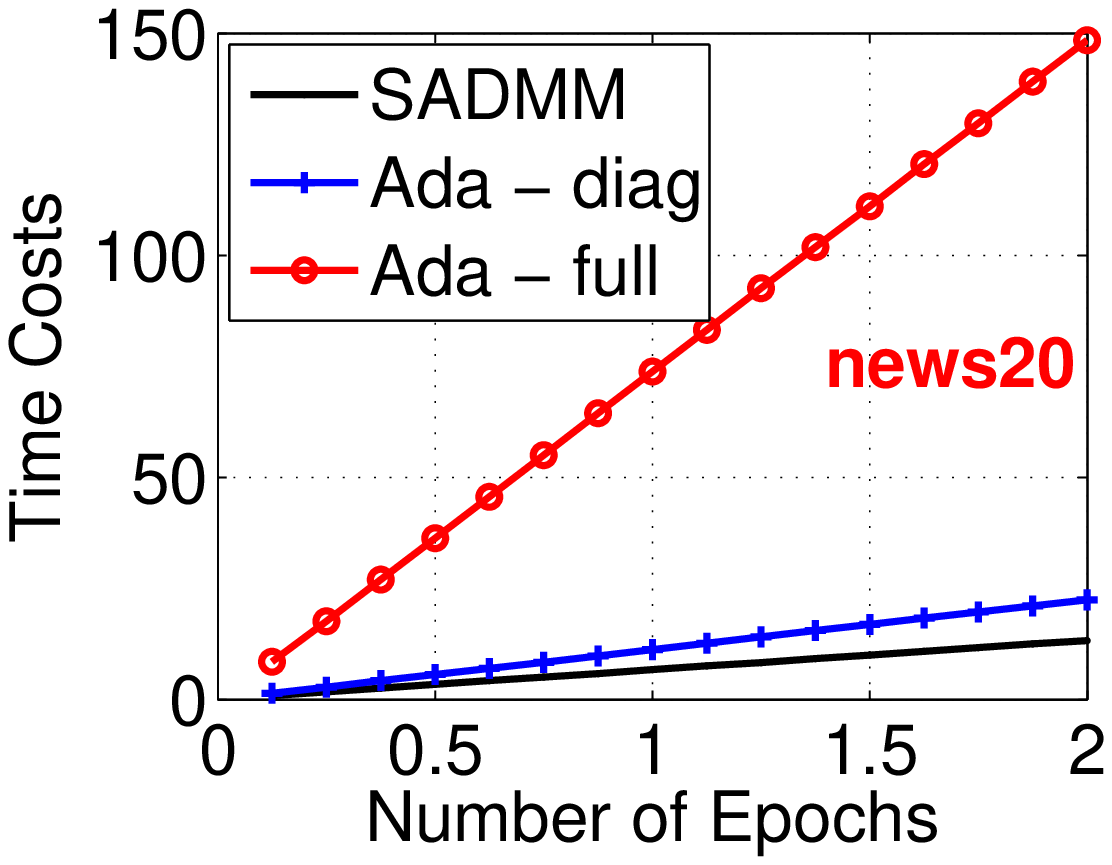}
}
\mbox{
\includegraphics[width=2in,height=1.35in]{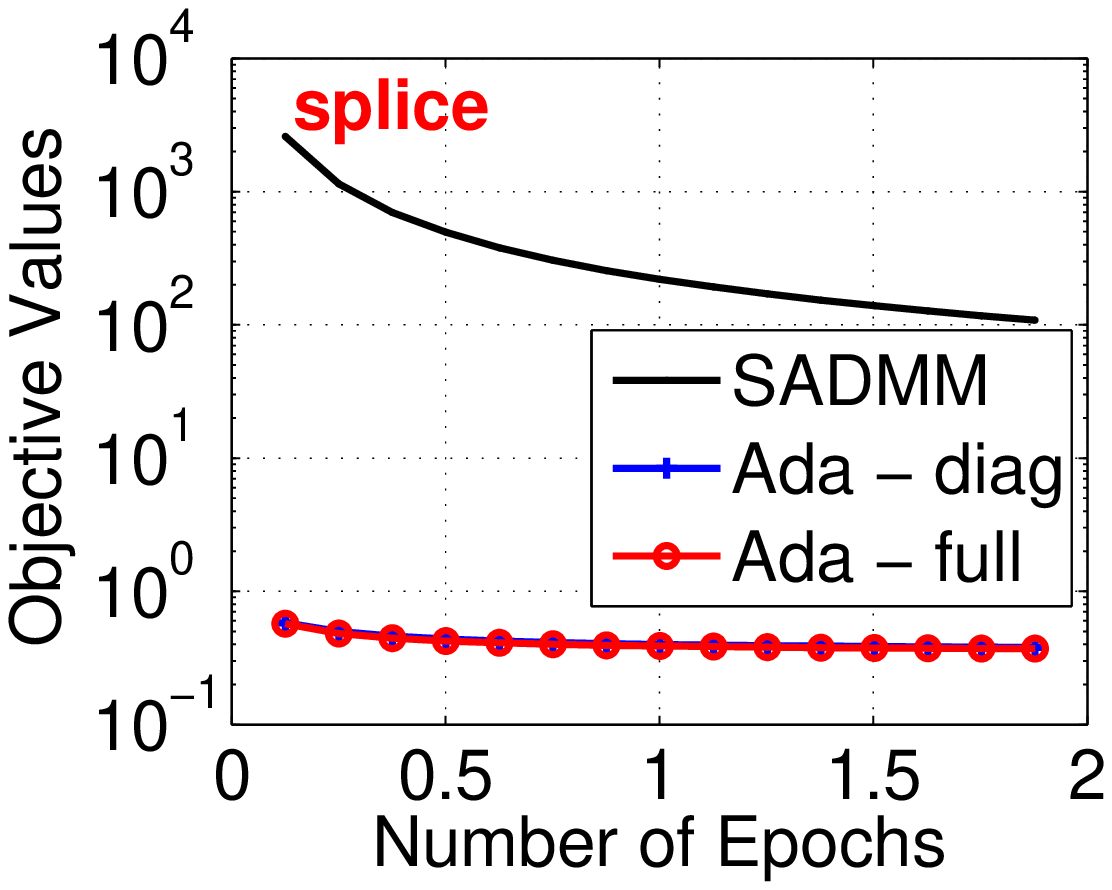}
\includegraphics[width=2in,height=1.35in]{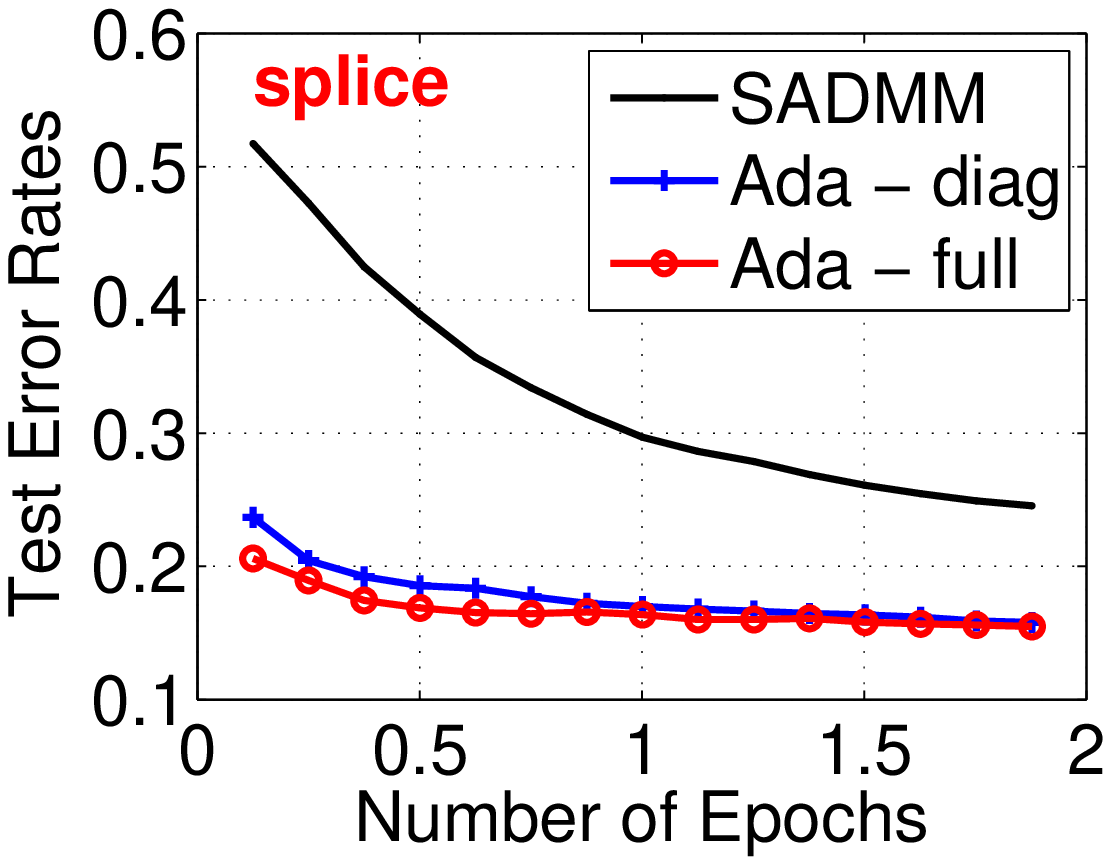}
\includegraphics[width=2in,height=1.35in]{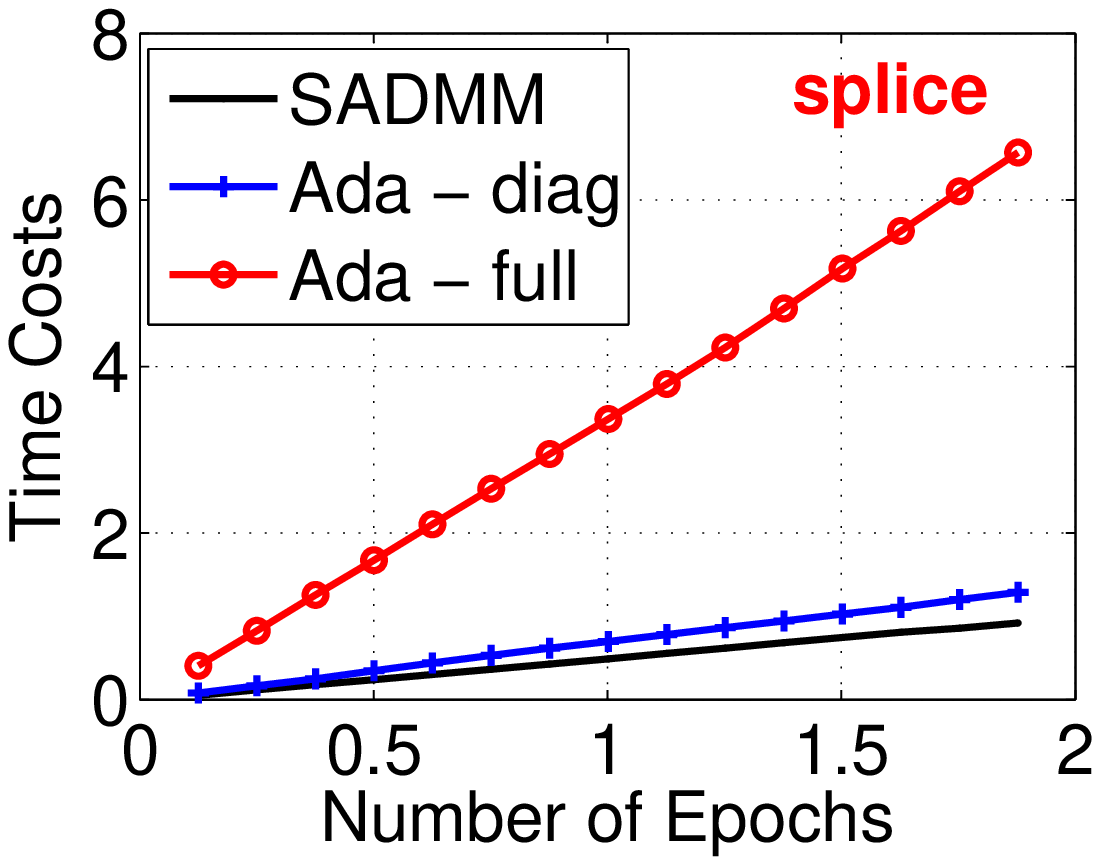}
}
\mbox{
\includegraphics[width=2in,height=1.35in]{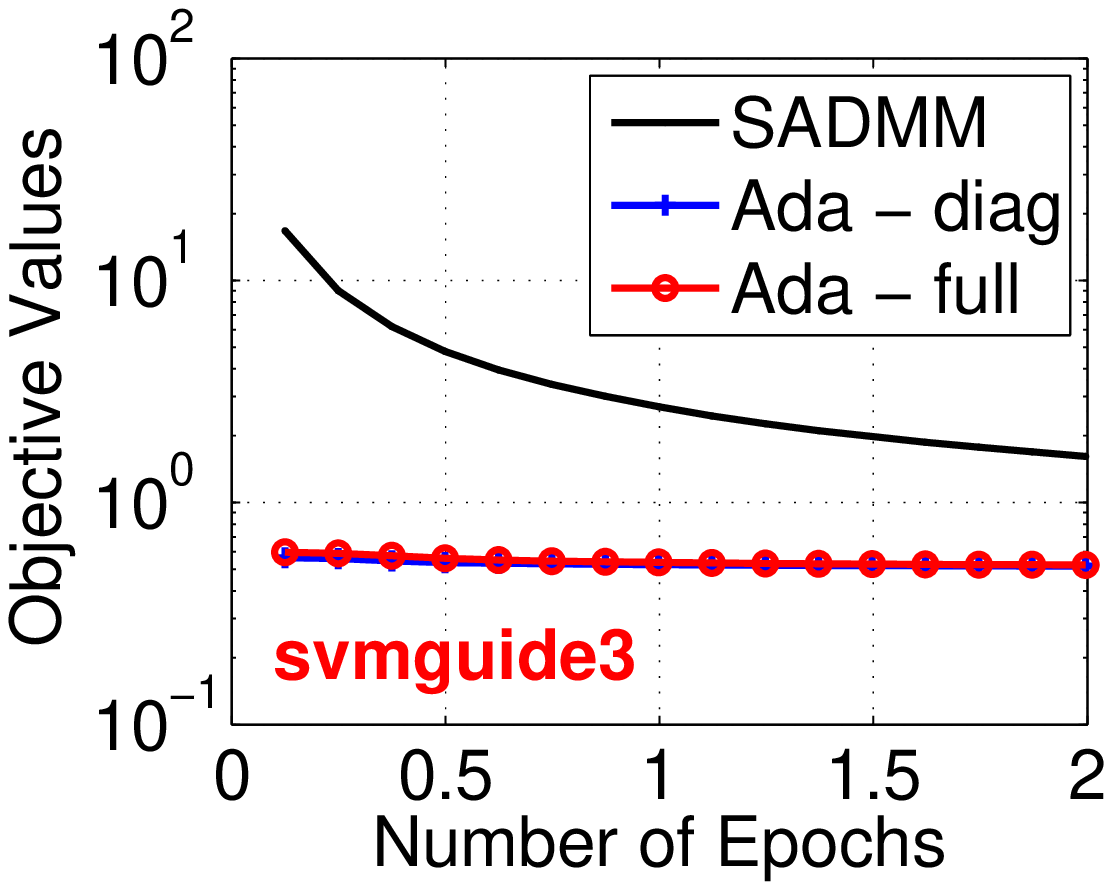}
\includegraphics[width=2in,height=1.35in]{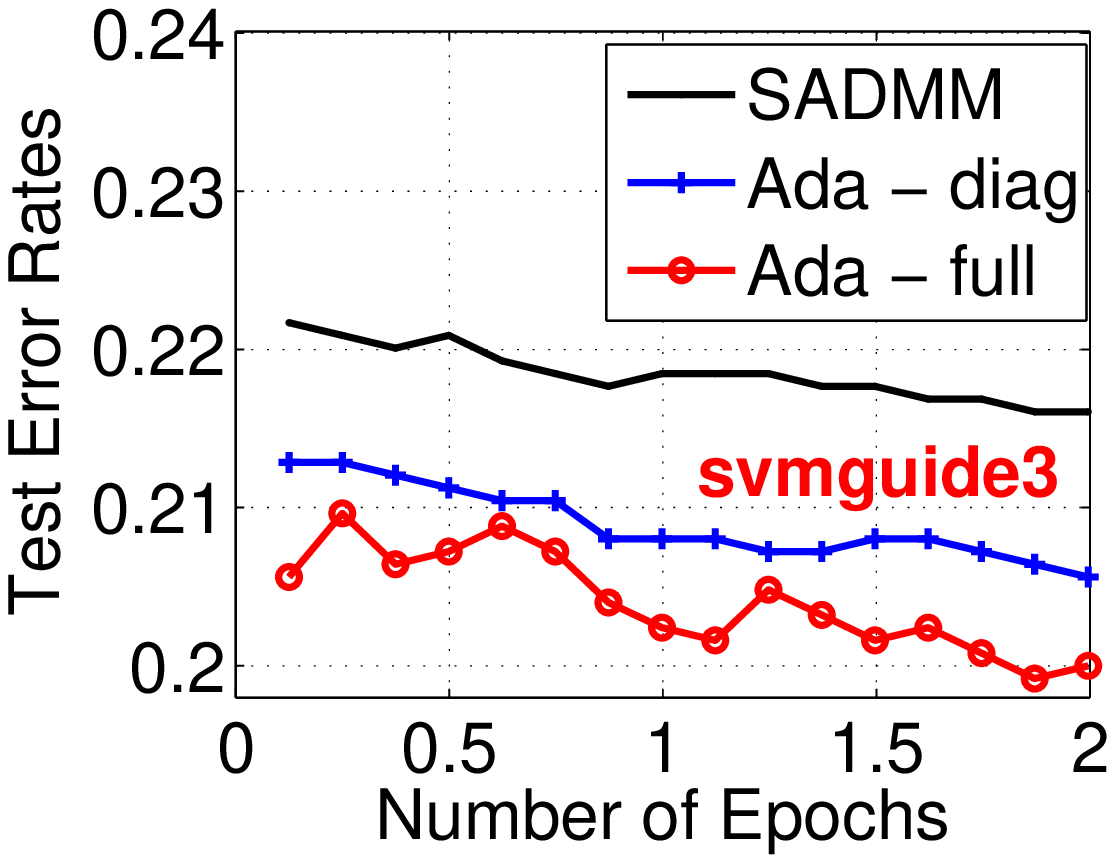}
\includegraphics[width=2in,height=1.35in]{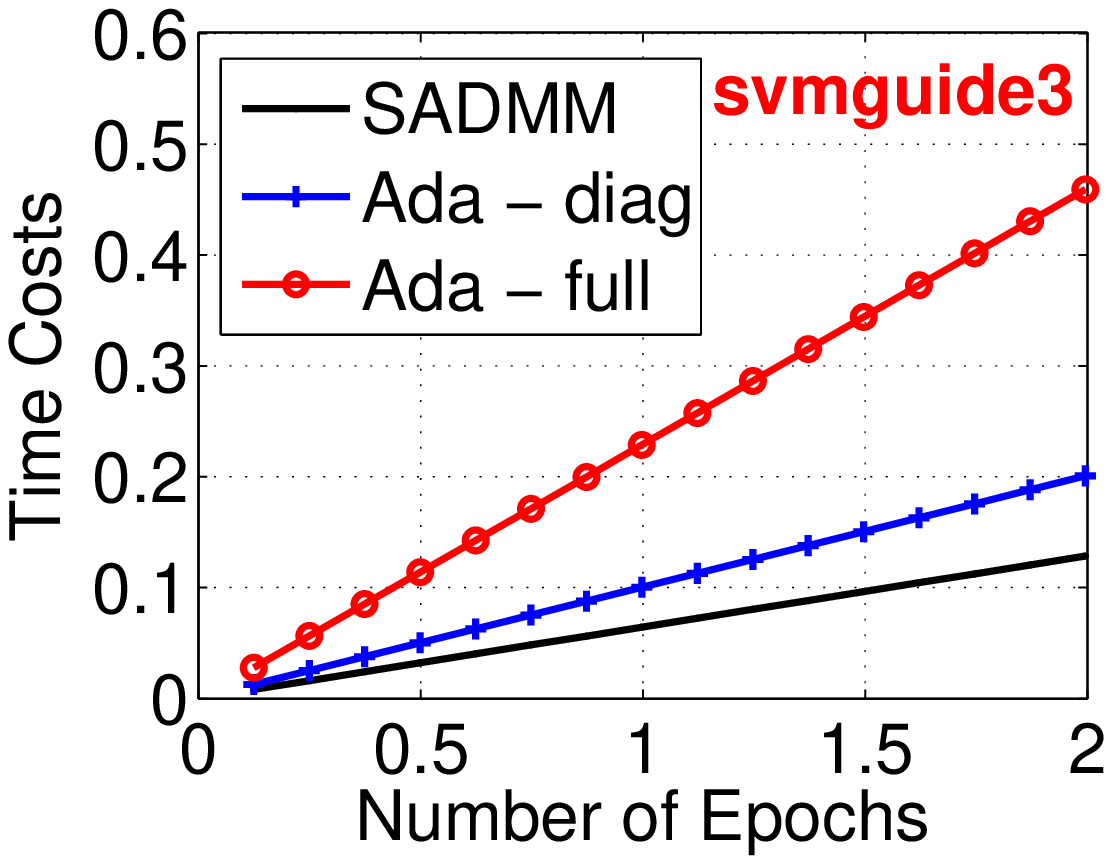}
}
\mbox{
\includegraphics[width=2in,height=1.35in]{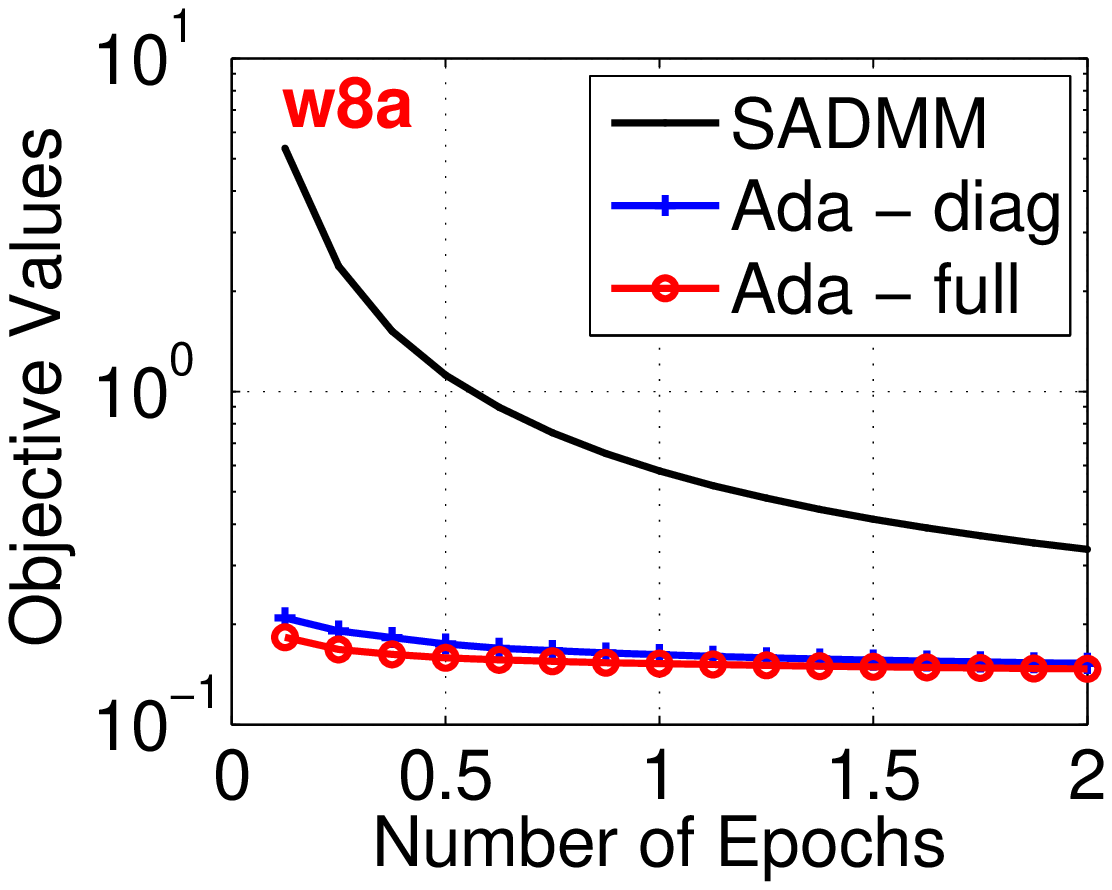}
\includegraphics[width=2in,height=1.35in]{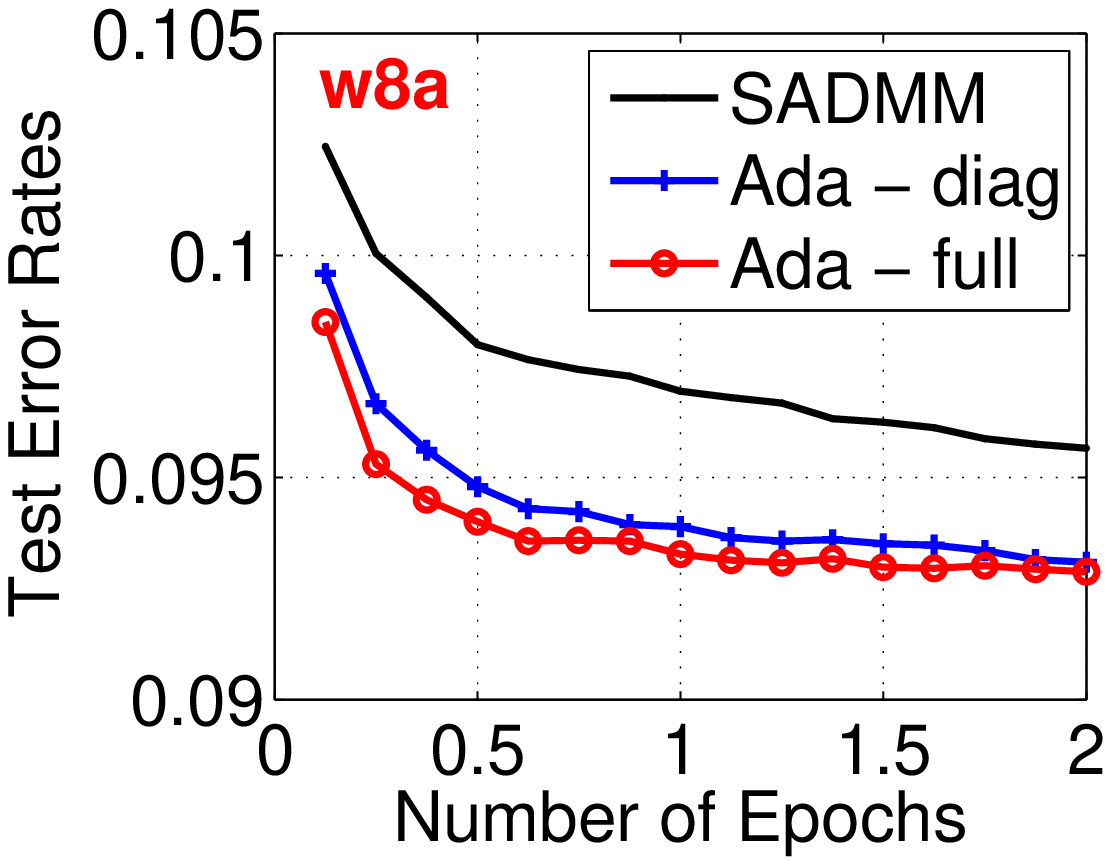}
\includegraphics[width=2in,height=1.35in]{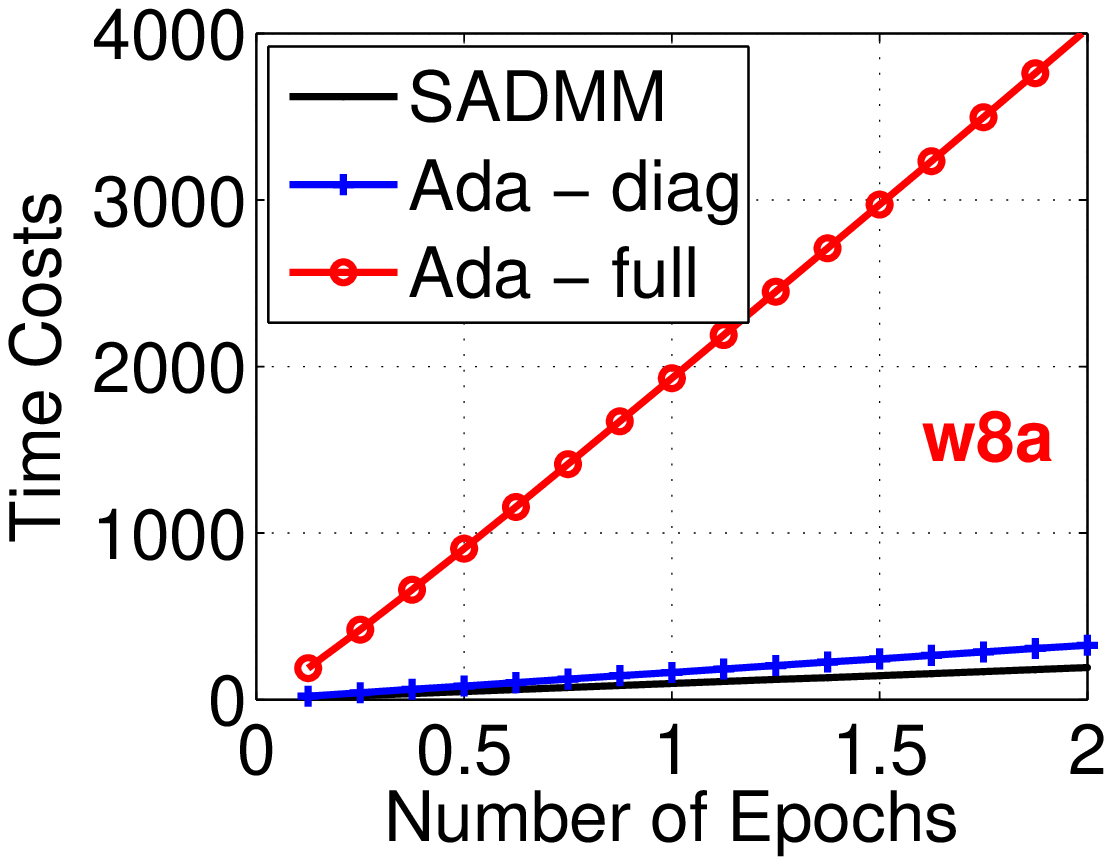}
}
\end{center}
\vspace{-0.2in}
\caption{Comparison between SADMM with  Ada-SADMM$_{diag}$ (``Ada-diag'') and Ada-SADMM$_{full}$ (``Ada-full'') on  6 real-world datasets. Epoch for the horizontal axis is the number of iterations divided by dataset size. \textbf{Left Panels}: Average objective values. \textbf{Middle Panels}:  Average test error rates. \textbf{Right Panels}: Average time costs (in seconds). }
\label{fig:ADMM}
\end{figure}

\section{Conclusion}
ADMM is a popular technique in machine learning. This paper studied to accelerate stochastic ADMM with adaptive subgradient, by replacing the fixed proximal function with adaptive proximal function. Compared with traditional stochastic ADMM, we show that the proposed adaptive algorithms converge significantly faster through the proposed adaptive strategies.  Promising experimental results on a variety of real-world datasets further validate the effectiveness of  our techniques.

\newpage
{
\bibliography{reference}
\bibliographystyle{plain}
}

\end{document}